\PassOptionsToPackage{section}{placeins} 
\documentclass[]{ailab}

\newcommand{\satr}{\textsc{SAT}\xspace}
\newcommand{\satrfull}{\textbf{S}taleness-\textbf{A}daptive \textbf{T}rust Region\xspace}
\AtBeginDocument{%
  \xspaceaddexceptions{-}
  \renewcommand{\eqref}[1]{\textup{(\ref{#1})}}
}

\usepackage{tabularx}
\usepackage{url}
\usepackage{adjustbox}
\usepackage{multirow}
\usepackage{pifont}
\usepackage{comment}
\usepackage{amsmath,amssymb}
\usepackage{amsthm}
\usepackage{mathtools}
\usepackage{colortbl}
\usepackage{color}
\usepackage{booktabs}
\usepackage{natbib}
\setcitestyle{square,comma,numbers,sort&compress}
\usepackage{graphicx}
\usepackage{subcaption}
\usepackage{wrapfig}
\usepackage{tcolorbox}
\usepackage[dvipsnames]{xcolor}
\RequirePackage{xspace}
\usepackage{nicefrac}
\usepackage{enumitem}
\usepackage{float}
\usepackage[section]{placeins}
\usepackage{algorithm}
\usepackage{algpseudocode}

\usepackage{pgfplots}
\pgfplotsset{compat=1.17}
\usepgfplotslibrary{fillbetween}

\definecolor{darkblue}{rgb}{0, 0, 0.5}
\hypersetup{colorlinks=true, citecolor=darkblue, linkcolor=darkblue, urlcolor=darkblue}

\definecolor{cGRPO}{HTML}{8A8F98}
\definecolor{cGSPO}{HTML}{3B4252}
\definecolor{cRRR}{HTML}{2F9E6E}
\definecolor{cTIS}{HTML}{E08A2E}
\definecolor{cTISR}{HTML}{18A0A8}
\definecolor{cDPPO}{HTML}{8B5CF6}
\definecolor{cSATR}{HTML}{1C5CAB}
\definecolor{cSATRR}{HTML}{C0392B}
\definecolor{cSienna}{HTML}{A0522D}
\definecolor{cAccent}{HTML}{2B6CB0}
\definecolor{cCollapse}{HTML}{8B0000}

\theoremstyle{plain}
\newtheorem{theorem}{Theorem}[section]
\newtheorem{lemma}[theorem]{Lemma}

\theoremstyle{definition}
\newtheorem{proposition}[theorem]{Proposition}

\newtcolorbox{takeaway}[1][]{colback=metabg!6,colframe=metabg!70,arc=2.5pt,
  boxrule=0.6pt,left=2mm,right=2mm,top=1.2mm,bottom=1.2mm,#1}

\newcommand{\elow}{\varepsilon_{\mathrm{low}}}
\newcommand{\ehigh}{\varepsilon_{\mathrm{high}}}
\newcommand{\dtv}{D_{\mathrm{TV}}}
\newcommand{\dpi}{\bar d_\pi}
\newcommand{\clip}{\operatorname{clip}}
\newcommand{\sg}{\operatorname{sg}}
\newcommand{\pos}[1]{[#1]_{+}}
\newcommand{\J}{\mathcal{J}}
\newcommand{\I}{\mathcal{I}}

\makeatletter
\DeclareRobustCommand\onedot{\futurelet\@let@token\@onedot}
\def\@onedot{\ifx\@let@token.\else.\null\fi\xspace}

\makeatother

\usepackage{amsmath,amsfonts,bm}

\def\eqref#1{equation~\ref{#1}}

\def\1{\bm{1}}

\DeclareMathAlphabet{\mathsfit}{\encodingdefault}{\sfdefault}{m}{sl}
\SetMathAlphabet{\mathsfit}{bold}{\encodingdefault}{\sfdefault}{bx}{n}

\newcommand{\E}{\mathbb{E}}

\usepackage{amsmath,amsfonts,bm}

\def\eqref#1{equation~\ref{#1}}

\def\1{\bm{1}}

\DeclareMathAlphabet{\mathsfit}{\encodingdefault}{\sfdefault}{m}{sl}
\SetMathAlphabet{\mathsfit}{bold}{\encodingdefault}{\sfdefault}{bx}{n}

\usepackage{multirow}
\usepackage{diagbox}
\usepackage{makecell}
\usepackage{tabularx}
\usepackage{graphicx}

\usepackage{array}
\usepackage{rotating}

\definecolor{aliceblue}{rgb}{0.94, 0.97, 1.0}
\definecolor{citecolor}{HTML}{0071BC}
\definecolor{linkcolor}{HTML}{ED1C24}
\definecolor{darkgreen}{HTML}{539165}

\makeatletter
\newcommand{\thickhline}{%
 \noalign {\ifnum 0=`}\fi \hrule height 1pt
 \futurelet \reserved@a \@xhline
}
\makeatother

\renewcommand{\paragraph}[1]{\vspace{1.25mm}\noindent\textbf{#1}}
\title{\centering Stale but Stable: Staleness-Adaptive Trust Regions for Stabilizing Asynchronous Reinforcement Learning}

\author[1,2,*]{Junyao Yang}
\author[1,\dag]{Yucheng Shi}
\author[1,3]{Zongxia Li}
\author[1,4]{Zhongzhi Li}
\author[1,5]{Ruhan Wang}
\author[6]{Xiangxin Zhou}
\author[1]{Kishan Panaganti}
\author[1]{Haitao Mi}
\author[1]{Leowei Liang}

\affiliation[1]{Tencent Hy LLM Frontier}
\affiliation[2]{National University of Singapore}
\affiliation[3]{University of Maryland, College Park}
\affiliation[4]{University of Georgia}
\affiliation[5]{Indiana University}
\affiliation[6]{Tencent Hy}

\contribution[\dag]{Project Lead}
\contribution[*]{Corresponding Author}

\email{junyaoyang@u.nus.edu, yuchengshi@global.tencent.com}
\headercontent{{\small Project Page: \href{https://jyyang26.github.io/stable_async_analysis}{https://jyyang26.github.io/stable\_async\_analysis}}}

\abstract{
Asynchronous reinforcement learning improves throughput by decoupling rollout generation from optimization, but the resulting staleness is an inevitable byproduct, compounded jointly by policy lag, engine delays, and mixture-of-experts routing. From a trust-region perspective, this mismatch is critical: in the finite-horizon improvement bound, training-inference divergence governs the approximation error, whereas PPO clipping only gates sampled outward updates and therefore acts as a sampled surrogate rather than a full-policy constraint. As a result, the high-staleness update can remain weakly controlled in exactly the asynchronous regime where stale rollouts matter most.

We introduce the \textbf{S}taleness-\textbf{A}daptive \textbf{T}rust Region (\textbf{\textsc{SAT}}), which uses the detached sampled log-ratio as a practical staleness proxy, identifies the high-mismatch tail within each batch through \textbf{Staleness-based kernel function scaling}, and contracts only the sign-selected endpoint of the nominal PPO interval using \textbf{Effective contraction factors}. 
This design preserves the baseline behavior on ordinary tokens, while making the update more conservative exactly on newly intercepted outward bands. 
We evaluate \textsc{SAT} in a fully decoupled asynchronous reinforcement learning setup built on Qwen3-30B-A3B-Base, leveraging SGLang as the inference engine and Megatron as the training pipeline. 
In this setting, \textsc{SAT}-GSPO w/ R3 attains the best observed AIME24 avg@8, reaching $35.83$ at lag $1$ and $34.79$ at lag $8$, while \textsc{SAT}-GSPO reaches $34.17$ at lag $1$. 
More broadly, the results indicate that aligning the clip interval with observed staleness heterogeneity is an effective way to stabilize the reported asynchronous regime.
}

\date{\today}

\begin{document}
\thispagestyle{firstheader}
\maketitle

\section{Introduction}
\label{sec:intro}
 
Asynchronous reinforcement learning serves as a core technique for training large foundation models, accelerating training speeds and maximizing GPU utilization. However, this asynchronous paradigm simultaneously suffers from training-inference mismatch and high staleness, both of which can significantly degrade the resulting model performance.
Shown in Figure ~\ref{fig:pipeline}, for the batch-wise asynchronous pipeline in which rollout and optimization run on decoupled resources and weights are broadcast every several trainer steps~\citep{zhu2025slime,zheng2023sglang,shoeybi2019megatron}. 
We call this interval step \emph{lag}, and the starting observation is that lag is only a coarse systems label. 
What the loss actually sees is the staleness between the behavior policy that generated token and the target policy that optimizes it, summarized by the log importance ratio between target and behavior policy, and in modern deployments this mismatch mixes policy-update lag with implementation effects from engine kernels, low-precision numerics, and mixture-of-experts routing. Its distribution is heterogeneous and long-tailed even within a single rollout window. 
We call this failure mode \textbf{Staleness-associated Asynchronous RL instability}: fully illustrated in Section~\ref{sec:bound}, as the staleness grows, with training-inference divergence inflates and fixed-clip asynchronous updates become increasingly brittle, ultimately threatening late-stage training stability.

In the finite-horizon policy-improvement bound, the approximation penalty is a cumulative total-variation divergence, and state-wise target-behavior policy divergence is exactly a behavior expectation of the absolute target-behavior policy ratio deviation; a hard ratio bound on \emph{all} actions would therefore control the penalty, with detailed derivation shown in Section~\ref{sec:ppo-soft}. PPO clipping imposes no such bound: it is a sampled-objective device that suppresses outward gradients beyond a nominal boundary, preserves pull-back updates, and inspects nothing beyond the one sampled action. Within that surrogate, a fixed radius treats every token as equally reliable with exactly the wrong prior under heterogeneous asynchronous mismatch, where a uniformly small radius starves learning on the bulk of tokens while a uniformly large one grants the high-staleness tail the same outward allowance as reliable data.  Yet the clip radius is precisely the handle that decides where outward gradients stop, motivating the central question of the paper: \textbf{Can the clip radius itself adapt to the staleness so that asynchronous RL becomes more stable exactly on the high-staleness scenerio?}
 
\textbf{From the perspective of the finite-horizon LLM policy-improvement bound, we arrive at a simple design principle:} \textbf{{to suppress future training--inference divergence growth, the clip radius should contract precisely on the observed high-staleness tail rather than remain fixed for every token.}}
We therefore propose \satrfull{} (\textbf{\satr}), which adapt the clip contraction to the observed staleness. 
\satr{} first treats the detached sampled log-ratio as a practical per-token staleness proxy, identifies the high-staleness tail of each batch through a self-calibrating quantile, and maps unusually large observed staleness to a tighter outward allowance through \textbf{Staleness-based kernel function scaling} while leaving ordinary tokens and pull-back updates at the baseline behavior. 
\satr{} then contracts only the sign-selected endpoint of PPO's nominal clip interval on those tokens through \textbf{Effective contraction factors}, thereby cutting off outward high-staleness updates earlier on the side that would locally enlarge the divergence penalty.

 
Across extensive asynchronous experiments under identical setting, we find that \satr{} achieves better performance in high-staleness scenarios, and that the top-performing \satr{}-GSPO w/ R3 variant also maintains comparatively low observed staleness. The two mechanisms are complementary in the reported runs: routing replay governs the level of the logged mismatch, while \satr{} changes where divergence gradients stop.
This paper makes three contributions:
\begin{itemize}
\item \textbf{Problem formalization.} We formalize batch-wise asynchronous RL, separate the configured lag from the realized sampled mismatch $d_{b,t}$ and its policy-update and implementation components, document the mismatch-inflation and evaluation-collapse signatures that emerge as lag grows, and make precise, via the finite-horizon policy-improvement bound, what a hard all-action ratio envelope would control and what the sampled clipped objective actually optimizes.
\item \textbf{Method and analysis.} We introduce \satr{}, a direction-aware, staleness-adaptive contraction of PPO's nominal clip interval, and characterize its update geometry exactly: the adaptive interval is contained in PPO's, the surrogate is pointwise pessimistic, and the two objectives differ only on one sign-selected outward band per gated token.
\item \textbf{Asynchronous evaluation.} We conduct controlled experiments at configured lags $1$ and $8$, showing that \satr{} achieves better performance in high-staleness scenarios and that the top-performing \satr{} variant also maintains comparatively low observed staleness. We position \satr{} within the asymmetric-clipping family, where it differs from PPO's fixed radius and DPPO's behavior-probability-weighted boundary by setting the active boundary from the observed staleness of the current batch.
\end{itemize}
\section{Async RL Preliminaries and Staleness Symptoms}
\label{sec:tv}

\subsection{Asynchronous RL and realized mismatch}
\label{sec:setup}

A prompt $x$ is answered by a response $y_b=(y_{b,1},\dots,y_{b,T_b})$ sampled autoregressively; the state at position $t$ is $s_{b,t}=(x,y_{b,1},\dots,y_{b,t-1})$. Rewards are sparse and sequence-level, $R(y)\in[-\xi,\xi]$, and $\hat A_{b,t}$ denotes the advantage estimate, taking the GRPO group as an example. The \emph{behavior policy} $\mu_{b,t}$ is the distribution the rollout engine actually sampled token $(b,t)$ from; the \emph{train policy} $\pi^{(j)}$ is the policy at the current trainer update $j$. The token-level importance ratio and per-token log-ratio are:
\begin{equation}
r_{b,t}\;=\;\frac{\pi^{(j)}(y_{b,t}\mid s_{b,t})}{\mu_{b,t}(y_{b,t}\mid s_{b,t})},
\qquad
d_{b,t}\;\triangleq\;\log r_{b,t},
\label{eq:ratio-def}
\end{equation}
and we write $\mathcal T_{\mathrm{act}}$ for the active (non-padding) token positions of one batch. The per-state total variation is:
\[
\dtv(\mu\,\|\,\pi)[s]=\tfrac12\sum_a|\mu(a\mid s)-\pi(a\mid s)|.
\]

\begin{figure}[htbp]
\centering
\includegraphics[width=0.85\linewidth]{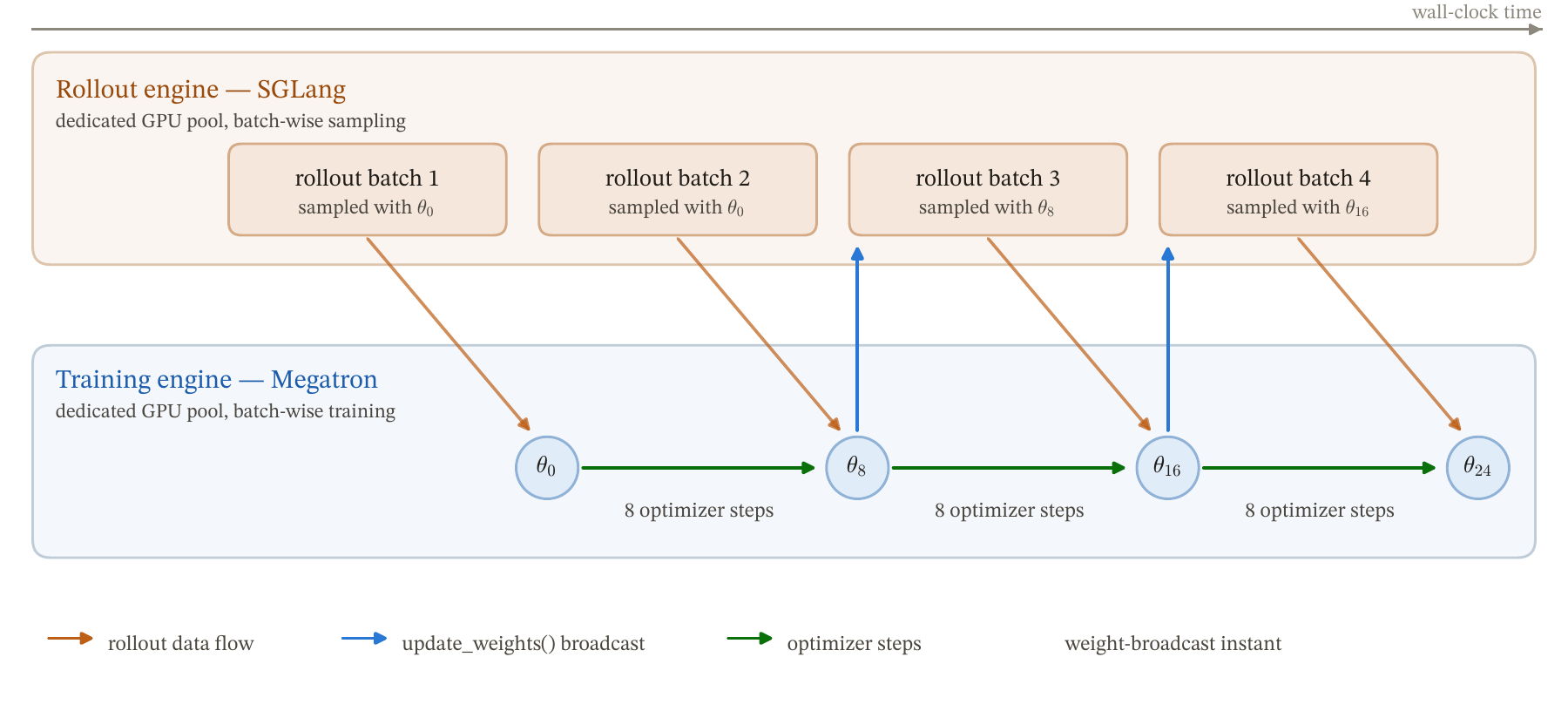}
\caption{\textbf{Batch-wise asynchronous RL.} Rollout and optimization run on decoupled resources with periodic weight broadcast, so one in-flight batch is typically trained at a configured lag of $n$ policy versions. This is the setting behind $\mu_{b,t}$, $\pi^{(j)}$, and $d_{b,t}$. 
With full notation table shown in Appendix~\ref{app:notation},
synchronous RL and fully Asynchronous RL workflows can be found in Appendix~\ref{app:sync-rl} and Appendix~\ref{app:fully-async}, respectively.}
\label{fig:pipeline}
\end{figure}

\FloatBarrier

\paragraph{Configured lag versus realized mismatch in batch-wise asynchronous RL.}
Our experiments use the batch-wise asynchronous regime of Figure~\ref{fig:pipeline}: weights are broadcast every $n$ trainer steps, and we call $n$ the \emph{configured lag}. Write $d_{b,t}=\log r_{b,t}$ for the observed sampled mismatch, let $\pi^{(\ell)}$ be the trainer-side reference log-probability at the rollout checkpoint, and define $\Delta_{b,t}^{\pi}=\log\pi^{(j)}-\log\pi^{(\ell)}$ and $\Delta_{b,t}^{\mathrm{impl}}=\log\pi^{(\ell)}-\log\mu_{b,t}$ at $(y_{b,t},s_{b,t})$. Then:
\begin{equation}
d_{b,t}
\;=\;\log r_{b,t}
\;=\;\Delta_{b,t}^{\pi}
\;+
\;\Delta_{b,t}^{\mathrm{impl}},
\label{eq:split}
\end{equation}
where $\Delta_{b,t}^{\pi}$ is the policy-update mismatch accumulated over the $n$ optimizer steps of the window and $\Delta_{b,t}^{\mathrm{impl}}$ is the implementation mismatch between the trainer-side reference and the actual rollout engine. The first term $\Delta_{b,t}^{\pi}$ is the object PPO's ratio machinery is designed to correct. 
$\Delta_{b,t}^{\mathrm{impl}}$ can remain nonzero even at identical weights because the two engines differ in kernels, numerics, and MoE routing. Engine, router, and hardware effects are useful \emph{diagnostic hypotheses} inside $\Delta_{b,t}^{\mathrm{impl}}$, but without additional intermediate distributions they do not form a unique additive decomposition; Appendix~\ref{app:sources} discusses them in that spirit. The operational consequence is the one stated in the introduction: the configured lag does not determine the realized distribution of $d_{b,t}$, so any mechanism keyed directly to $n$ addresses only part of the problem.

\subsection{Staleness-associated training instability}
\label{sec:bound}

Our experiments point to a crucial background fact: once the batch-wise async lag becomes large enough, the main issue is no longer only a looser lower bound but an observable loss of training stability. In the controlled comparisons below, all runs share optimizer, data, reward, and model; only the configured lag differs. We therefore read these figures as background evidence for what async RL must explain, not as a theorem about a unique causal threshold.

\begin{figure}[htbp]
\centering
\begin{subfigure}[t]{0.495\linewidth}
\centering
\includegraphics[width=\linewidth]{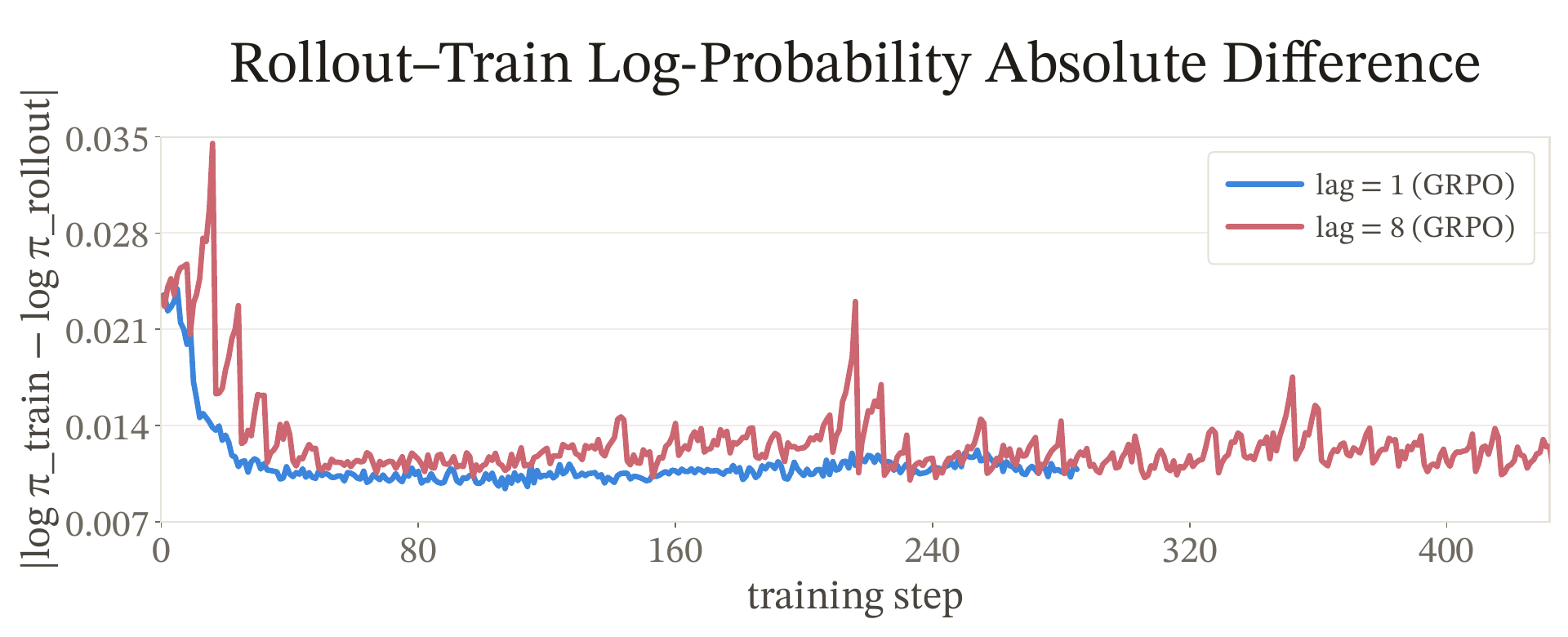}
\caption{Training--inference mismatch $\dpi$.}
\label{fig:collapse-dpi}
\end{subfigure}\hfill
\begin{subfigure}[t]{0.495\linewidth}
\centering
\includegraphics[width=\linewidth]{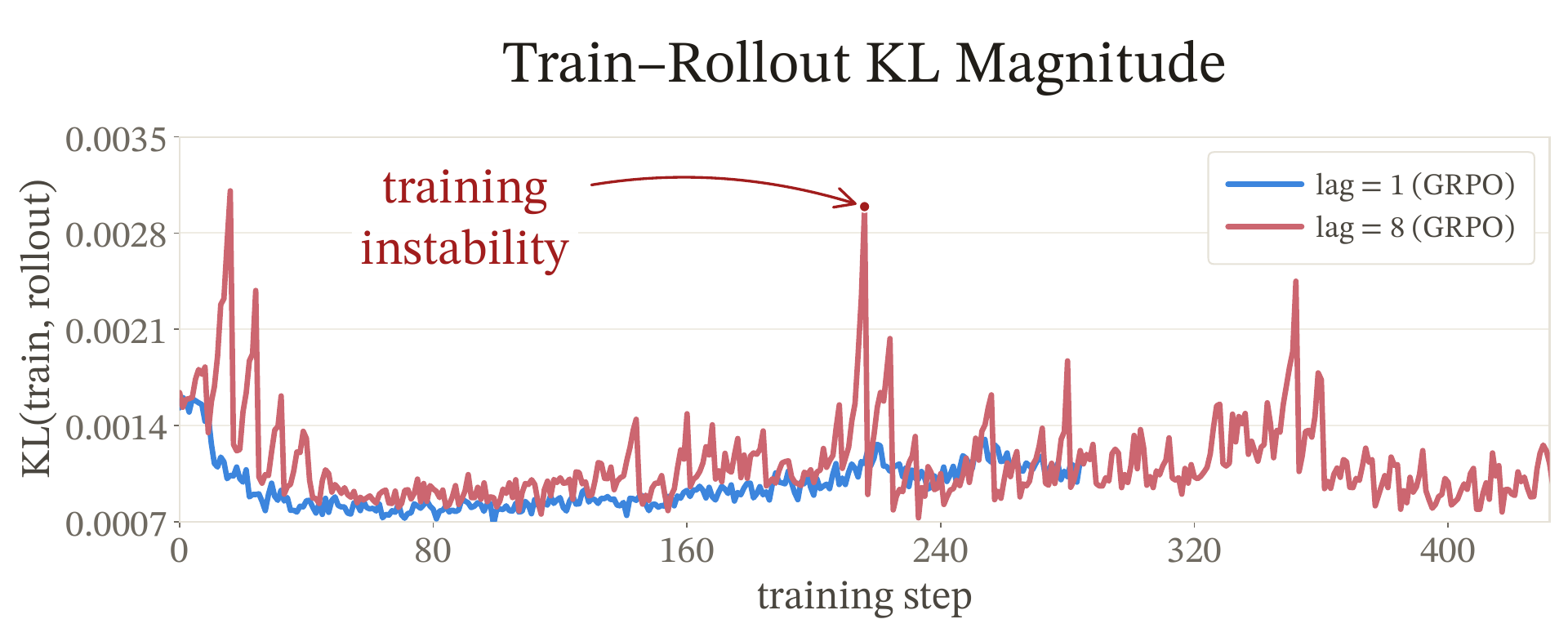}
\caption{Train--rollout $\mathrm{KL}$.}
\label{fig:collapse-kl}
\end{subfigure}
\caption{\textbf{Background instability diagnostics under a variance method at configured lag $1$ and $8$.} This figure tracks two batch-averaged mismatch diagnostics of a variance method across training under two configured lag settings, with the two curves in each panel distinguishing the mild-lag and high-lag runs under an otherwise identical setup. The panels also mark where the high-lag run first leaves its low-mismatch baseline band, together with the numerical range covered by each axis.}
\label{fig:collapse-diagnostics}
\end{figure}

\begin{figure}[htbp]
\centering
\includegraphics[width=0.85\linewidth]{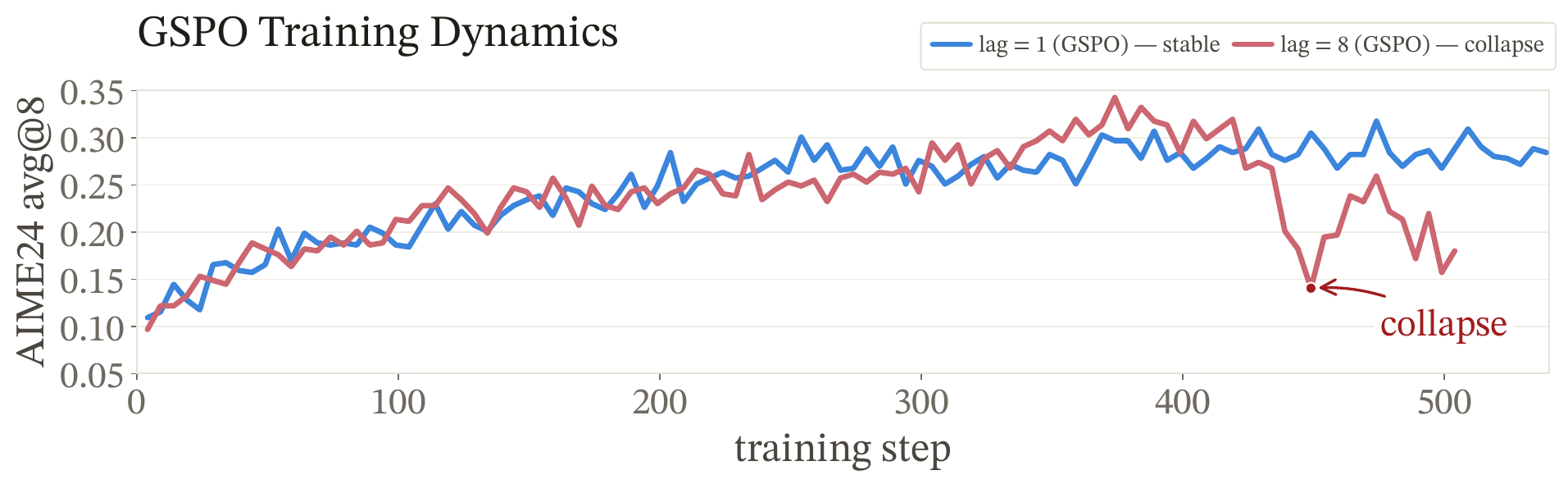}
\caption{\textbf{Background evaluation collapse under a variance method at configured lag $1$ and $8$.} This figure plots the evaluation benchmark score of a variance method across training under two configured lag settings, where the collapse step of the high-lag run is explicitly annotated with a point and label. The panel is used to align the timing of evaluation collapse with the mismatch signals of Figure~\ref{fig:collapse-diagnostics}.}
\label{fig:collapse-aime}
\end{figure}


\textbf{Two instability symptoms recur under high-lag asynchronous training.} Figure~\ref{fig:collapse-diagnostics} and Figure~\ref{fig:collapse-aime} shows how batch-wise asynchronous training deteriorate as the lag grows, from which three observations stand out.
\begin{itemize}
    \item \textbf{Mismatch inflation under increased staleness.} Mismatch $\dpi$ and its KL counterpart stay in a narrow band at lag $1$ but develop heavy-tailed spikes at lag $8$, where $\dpi$ exceeds $0.03$ and the KL diagnostic exceeds $2.8\times 10^{-3}$, indicating a widening of the ratio distribution rather than a shift of its center.
    \item \textbf{Training collapse due to increase of staleness.} The lag-$1$ run tracks the evaluation-benchmark band near $0.30$, whereas the lag-$8$ run peaks near $0.34$ before falling to about $0.14$ by step $449$. 
\end{itemize}

\section{LLM Policy Improvement and Ratio Control in Async RL}
\label{sec:ppo-soft}

\subsection{Finite-horizon LLM policy-improvement surrogate}
\label{sec:llm-pi}

Consider an undiscounted, finite-horizon generation problem with $|R(y)|\le\xi$, behavior policy $\mu$, and target policy $\pi$, and define $\J(\pi)=\E_{y\sim\pi}[R(y)]$. We assume trajectory support compatibility: at every $\mu$-reachable state used below, $\pi(\cdot\mid s)\ll\mu(\cdot\mid s)$. When necessary, an end-of-sequence action and padding make the horizon fixed. Following the Kakade--Langford line of analysis~\citep{kakade2002approximately,schulman2015trpo}, define the linear surrogate:
\begin{equation}
L'_{\mu}(\pi)\;=\;\E_{y\sim\mu}\!\left[R(y)\sum_{t=1}^{T}\Big(\frac{\pi(y_t\mid s_t)}{\mu(y_t\mid s_t)}-1\Big)\right].
\label{eq:surrogate}
\end{equation}
A representative finite-horizon lower bound~\citep{qi2026dppo} is:
\begin{equation}
\J(\pi)-\J(\mu)\;\geq\;L'_{\mu}(\pi)\;-
4\xi\,\E_{y\sim\mu}\!\left[\sum_{t=1}^{T}\dtv\big(\mu(\cdot\mid s_t)\,\|\,\pi(\cdot\mid s_t)\big)\right].
\label{eq:tvbound}
\end{equation}
The penalty is an expected \emph{cumulative} token-level divergence, not a length-normalized average. Equation~\eqref{eq:tvbound} is the policy-improvement statement relevant for the rest of the paper: smaller statewise divergence makes the approximation term less pessimistic, but it does not by itself imply that an arbitrary smaller update is automatically better. Non-decrease still depends on the surrogate gain dominating the divergence penalty.


\subsection{Ratio envelopes and the sampled clip in async RL}
\label{sec:ratio-envelope}

The relation between ratio control and Section~\ref{sec:llm-pi}'s policy-improvement surrogate is exact but conditional on support compatibility.

\begin{lemma}[\textbf{Ratio deviation as a statewise divergence identity}]
\label{lem:tv-ratio}
Fix a state $s$ and suppose $\pi(\cdot\mid s)\ll\mu(\cdot\mid s)$. Then, with $r_s(a)=\pi(a\mid s)/\mu(a\mid s)$,
\begin{equation}
\dtv\big(\mu(\cdot\mid s)\,\|\,\pi(\cdot\mid s)\big)
\;=\;\frac12\,\E_{a\sim\mu(\cdot\mid s)}\big|r_s(a)-1\big|.
\label{eq:tvid}
\end{equation}
\end{lemma}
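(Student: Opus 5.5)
The plan is to unfold the definition of $\dtv$ and factor out $\mu(a\mid s)$ term by term, splitting the action space according to whether $\mu(a\mid s)$ vanishes. Let $\mathcal S_\mu=\{a:\mu(a\mid s)>0\}$ denote the support of $\mu(\cdot\mid s)$. On $\mathcal S_\mu$ the ratio $r_s(a)=\pi(a\mid s)/\mu(a\mid s)$ is well defined, and we can write
\[
|\mu(a\mid s)-\pi(a\mid s)| \;=\; \mu(a\mid s)\,\bigl|1-r_s(a)\bigr| \;=\; \mu(a\mid s)\,\bigl|r_s(a)-1\bigr|.
\]
Off the support, $\mu(a\mid s)=0$, and absolute continuity $\pi(\cdot\mid s)\ll\mu(\cdot\mid s)$ forces $\pi(a\mid s)=0$ as well. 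Those actions therefore contribute nothing to $\sum_a|\mu(a\mid s)-\pi(a\mid s)|$.

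Combining the two cases gives
\[
\dtv\bigl(\mu(\cdot\mid s)\,\|\,\pi(\cdot\mid s)\bigr)=\tfrac12\sum_{a\in\mathcal S_\mu}\mu(a\mid s)\,|r_s(a)-1| .
\]
The right-hand side is exactly $\tfrac12\,\E_{a\sim\mu(\cdot\mid s)}|r_s(a)-1|$, since an expectation under $\mu(\cdot\mid s)$ only weights actions in $\mathcal S_\mu$. This is where the convention matters: $r_s$ may be undefined or assigned an arbitrary value off the support without changing the expectation. That yields \eqref{eq:tvid}.

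The only delicate point is this off-support bookkeeping, and it is precisely what the hypothesis $\pi\ll\mu$ handles. Without it, actions with $\mu(a\mid s)=0<\pi(a\mid s)$ would add $\tfrac12\sum_{a\notin\mathcal S_\mu}\pi(a\mid s)$ to $\dtv$, and the $\mu$-expectation cannot detect that mass. The identity would then fail, degrading to the inequality $\dtv\ge\tfrac12\E_\mu|r_s-1|$. I would remark on this briefly, since it explains why the trajectory support compatibility assumption of Section~\ref{sec:llm-pi} is needed.

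If one wants the statement for a general (non-discrete) action space, the same argument applies with sums replaced by integrals against a dominating measure and $r_s$ read as the Radon--Nikodym derivative $d\pi/d\mu$. For the token vocabulary the finite-sum version above suffices.
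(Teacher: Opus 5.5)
Your proof is correct and matches the paper's argument: factor $\mu(a\mid s)$ out of each term of $\tfrac12\sum_a|\pi(a\mid s)-\mu(a\mid s)|$, then use $\pi(\cdot\mid s)\ll\mu(\cdot\mid s)$ to rule out unaccounted target mass on actions where $\mu(a\mid s)=0$. Your added remark is also right: without absolute continuity the identity weakens to $\dtv\ge\tfrac12\E_{a\sim\mu}|r_s(a)-1|$, with the gap equal to half the off-support mass of $\pi$.
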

\textbf{Proof.}
$\frac12\sum_{a}|\pi(a\mid s)-\mu(a\mid s)|=\frac12\sum_{a}\mu(a\mid s)\,|r_s(a)-1|$; absolute continuity rules out unaccounted target mass on actions with $\mu(a\mid s)=0$.

Full-vocabulary softmax sampling satisfies the support condition in exact arithmetic, which is why our experiments in Section~\ref{sec:exp-setup} disable nucleus truncation. However, top-$k$ or top-$p$ truncation can violate this condition and requires the separate treatment discussed in Appendix~\ref{app:stabilizers}.

Equation~\eqref{eq:tvid} is the step that connects a hard ratio envelope to the policy-improvement bound of Equation~\eqref{eq:tvbound}. If, at a fixed state, every action satisfies a hard ratio bound, then the divergence term in Equation~\eqref{eq:tvbound} is correspondingly controlled:
\begin{proposition}[\textbf{Ideal all-action ratio region}]
\label{prop:hard-ratio}
Under the conditions of Lemma~\ref{lem:tv-ratio}, if:
\begin{equation}
|r_s(a)-1|\;\le\;\varepsilon\qquad\text{for $\mu$-almost every action $a$,}
\label{eq:all-action}
\end{equation}
then $\dtv(\mu\,\|\,\pi)[s]\le\varepsilon/2$. More generally, if $1-\varepsilon_{\mathrm{low}}(a)\le r_s(a)\le 1+\varepsilon_{\mathrm{high}}(a)$ for every action, then:
\begin{equation}
\dtv(\mu\,\|\,\pi)[s]\;\le\;\frac12\,\E_{a\sim\mu}\max\big\{\varepsilon_{\mathrm{low}}(a),\,\varepsilon_{\mathrm{high}}(a)\big\},
\label{eq:asym-tv}
\end{equation}
which for constant asymmetric radii is at most $\tfrac12\max\{\elow,\ehigh\}$.
\end{proposition}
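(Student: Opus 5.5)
The plan is to reduce everything to the identity of Lemma~\ref{lem:tv-ratio}, which expresses the statewise divergence as $\dtv(\mu\,\|\,\pi)[s]=\frac12\,\E_{a\sim\mu(\cdot\mid s)}|r_s(a)-1|$. Once this is in hand, the proposition becomes a pointwise bound on the integrand $|r_s(a)-1|$, integrated against $\mu(\cdot\mid s)$. Since the expectation is taken with respect to $\mu$, a hypothesis that holds only $\mu$-almost everywhere is enough: actions with $\mu(a\mid s)=0$ contribute nothing to the expectation. Absolute continuity $\pi\ll\mu$, already assumed in the lemma, guarantees that no target mass hides on those actions.

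For the symmetric case, I substitute \eqref{eq:all-action} into \eqref{eq:tvid}. This gives $\E_{a\sim\mu}|r_s(a)-1|\le\varepsilon$ by monotonicity of the expectation, and therefore $\dtv\le\varepsilon/2$.

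For the asymmetric case, the key step is the pointwise inequality $|r_s(a)-1|\le\max\{\elow(a),\ehigh(a)\}$. I obtain it by splitting on the sign of $r_s(a)-1$:
\begin{itemize}
\item if $r_s(a)\ge1$, then $|r_s(a)-1|=r_s(a)-1\le\ehigh(a)$;
\item if $r_s(a)<1$, then $|r_s(a)-1|=1-r_s(a)\le\elow(a)$.
\end{itemize}
In either case the deviation is bounded by the maximum of the two radii. Taking the $\mu$-expectation and applying \eqref{eq:tvid} yields \eqref{eq:asym-tv}. When the radii are constant, the expectation of a constant is that constant, which gives $\tfrac12\max\{\elow,\ehigh\}$.

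No step here is a genuine obstacle; the argument is essentially a one-line monotonicity argument. The only points needing care are measure-theoretic bookkeeping. First, the hypothesis should be read as holding $\mu$-a.e., so that the expectation bound is legitimate. Second, $a\mapsto\max\{\elow(a),\ehigh(a)\}$ must be integrable, which is automatic for a finite vocabulary. I would also remark that, because $\dtv\le1$ always, the bound is informative only when the radii are small.
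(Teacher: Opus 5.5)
Your proposal is correct and follows the paper's proof: the paper also derives the pointwise bound $|r_s(a)-1|\le\varepsilon$ (respectively $|r_s(a)-1|\le\max\{\varepsilon_{\mathrm{low}}(a),\varepsilon_{\mathrm{high}}(a)\}$) from each hypothesis and substitutes it into the identity of Lemma~\ref{lem:tv-ratio}. Your explicit sign split, and your remark that the hypothesis is only needed $\mu$-almost everywhere (since $r_s(a)$ is undefined where $\mu(a\mid s)=0$), fill in details the paper leaves implicit.
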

\textbf{Proof.}
Each assumption gives the corresponding pointwise bound on $|r_s(a)-1|$, substitution into Equation~\eqref{eq:tvid} proves each claim.

This proposition describes an \emph{ideal all-action feasible set}. The quantifier is essential: a bound on the one sampled action says nothing about the rest of the vocabulary distribution. That gap is exactly why sampled clipping in async RL should be read as a surrogate motivated by the hard-envelope argument above, not as the hard envelope itself.

\section{\satr: From Heterogeneous Mismatch to an Adaptive Clip}
\label{sec:satr}



\subsection{From Policy Improvement to a Staleness-Adaptive Radius}
\label{sec:satr-motivation}
Section~\ref{sec:ppo-soft} ends one step short of a method: shown in Lemma~\ref{lem:tv-ratio}, statewise divergence controls the finite-horizon approximation term, that divergence is exactly a behavior expectation of absolute ratio deviation, and a hard all-action ratio envelope would therefore control the penalty, as discussed in Proposition~\ref{prop:hard-ratio}, but nothing yet says what the \emph{sampled} objective should do when the envelope is unavailable. To make the target of that chain explicit, we name once the two LLM-regime divergence quantities:
\begin{equation}
\begin{gathered}
D_{\mathrm{TV}}^{\max}(\mu\|\pi)\triangleq \max_{s_t} D_{\mathrm{TV}}\bigl(\mu(\cdot\mid s_t)\|\pi(\cdot\mid s_t)\bigr),\\
D_{\mathrm{TV}}(\mu,\pi)\triangleq \mathbb{E}_{y\sim\mu}\!\left[\sum_{t=1}^{|y|} D_{\mathrm{TV}}\bigl(\mu(\cdot\mid s_t)\|\pi(\cdot\mid s_t)\bigr)\right].
\end{gathered}
\label{eq:satr-divergences}
\end{equation}
In finite-horizon LLM setting with horizon $T$, $\gamma=1$, and $\xi=\max_y |R(y)|$, the improvement gap admits both:
\begin{align}
\J(\pi)-\J(\mu)
&\ge L'_{\mu}(\pi)-2\xi\,T(T-1)\,\bigl(D_{\mathrm{TV}}^{\max}(\mu\|\pi)\bigr)^{2},\label{eq:satr-maxbound}\\
\J(\pi)-\J(\mu)
&\ge L'_{\mu}(\pi)-4\xi\,D_{\mathrm{TV}}(\mu,\pi).
\label{eq:satr-avgbound}
\end{align}
The first is the finite-horizon analogue of a max-divergence trust-region bound; the second restates Equation~\eqref{eq:tvbound} in the named notation and is the cumulative token-level form that a token-wise sampled objective can actually influence on long responses. What PPO optimizes, however, is neither bound but the sampled clipped surrogate:
\begin{equation}
\mathcal S^{\mathrm{PPO}}_{b,t}
=\min\!\Big(r_{b,t}\hat A_{b,t},\;\bar r^{\mathrm{PPO}}_{b,t}\hat A_{b,t}\Big),
\qquad
\bar r^{\mathrm{PPO}}_{b,t}=\clip\!\big(r_{b,t},\,1-\elow,\,1+\ehigh\big),
\label{eq:ppo-objective}
\end{equation}
whose only guardrail is the fixed pair $(\elow,\ehigh)$: the clipped branch suppresses outward gradients beyond the nominal boundary, preserves pull-back updates, and inspects nothing beyond the one sampled action.
 
\paragraph{Why a fixed radius is an imperfect default.}
A fixed $\varepsilon$ treats every sampled token as equally reliable. In the batch-wise asynchronous pipeline of Section~\ref{sec:tv}, however, the realized mismatch $d_{b,t}$ mixes policy-update and implementation terms in Equation~\eqref{eq:split} and is heterogeneous within a single rollout window: most tokens remain close to the behavior policy, while a small tail is affected by policy lag, training--inference engine differences, router choices, or numerical effects. A uniformly tiny radius therefore suppresses useful learning on the bulk of the data, while a uniformly large radius grants the high-staleness tail the same outward allowance as reliable tokens. Read against Equations~\eqref{eq:satr-maxbound}--\eqref{eq:satr-avgbound}, both defaults miss the point: the divergence penalty is driven by where the mismatch is large, and no single global radius can be simultaneously permissive on the bulk and conservative on the tail.

\paragraph{A sampled proxy, not a divergence estimate.}
The observable that exposes this tail is the sampled log-ratio $d_{b,t}=\log r_{b,t}$, which is already computed in off-policy training. It must not, however, be interpreted as $D_{\mathrm{TV}}$: the sampled action's contribution to the $D_{\mathrm{TV}}$ sum carries a behavior-probability weight,
\begin{equation}
\tfrac12\,\mu(a\mid s)\,\big|r(a\mid s)-1\big|\;=\;\tfrac12\,\big|\pi(a\mid s)-\mu(a\mid s)\big|,
\label{eq:tv-mass}
\end{equation}
so a rare action can have an enormous ratio while moving little probability mass. As detailed in Section~\ref{sec:limitations}, its use as a per-token risk score is an empirical design choice.

\paragraph{Design of SAT.}
These observations govern how an adaptive clipping rule should operate. Because training only observes sampled actions rather than the full distribution, the rule cannot enforce a strict, all-action trust region; instead, it must function at the same surrogate objective level as Equation~\eqref{eq:ppo-objective}.
Specifically, SAT contracts PPO's nominal interval only on tokens whose observed mismatch is unusually large {for the current batch}, and only on the side that moves the sampled ratio farther from one, with the direction that locally enlarges the divergence penalty. Consequently, it leaves ordinary tokens and pull-back updates untouched, reducing exactly to the baseline objective when disabled.

\subsection{Staleness-Adaptive Trust Region}
\label{sec:satr-method}

\textbf{{SAT targets a simple but useful goal, which reshapes the sampled PPO surrogate so that tokens with unusually large observed mismatch receive a tighter outward allowance, while ordinary tokens and pull-back updates retain the baseline behavior}}. The whole method is defined by the sequence of formulas below.

\textbf{Detached sampled token-level staleness.} \satr{} begins from the sampled mismatch signal already available in asynchronous training and treats the detached token-level log-ratio as a practical staleness proxy for the current sample:
\begin{equation}
d_{b,t}=\log r_{b,t},\qquad d^{\mathrm{sg}}_{b,t}\triangleq\sg[d_{b,t}],
\label{eq:satr-d}
\end{equation}
where $\sg$ denotes stop-gradient. This detached score is not itself optimized; it only decides how conservative the clip interval should be on the current sample.

\textbf{Staleness-based kernel function scaling.} Rather than compare $|d_{b,t}|$ with a fixed global threshold, \satr{} calibrates the high-staleness tail relative to the current batch through the detached empirical quantile:
\begin{equation}
\widehat F_{|d|}(x)=\frac{1}{|\mathcal T_{\mathrm{act}}|}\sum_{(b,t)\in\mathcal T_{\mathrm{act}}}\mathbf 1\{|d_{b,t}|\le x\},
\qquad
q=\sg\Big[\inf\{x\ge 0:\widehat F_{|d|}(x)\ge \alpha\}\Big].
\label{eq:quantile}
\end{equation}
The role of $q$ is purely referential: it marks where the current batch begins to enter its mismatch tail, where $\alpha$ denotes the target quantile threshold, which set to 0.90 in our implementation. Because $q$ is recomputed from the same active tokens being optimized, the method adapts to rescaling of the mismatch profile without introducing another hand-tuned absolute cutoff.
\satr{} then maps the sign-separated mismatch magnitudes through a monotone kernel:
\begin{equation}
\psi(u;q)=\frac{1}{1+(u/q)^{2}},
\qquad
u_{+,b,t}=\pos{d^{\mathrm{sg}}_{b,t}},\quad u_{-,b,t}=\pos{-d^{\mathrm{sg}}_{b,t}},
\label{eq:kernel}
\end{equation}
where $\psi(0;q)=1$, $\psi(q;q)=\tfrac12$, and $\partial\psi/\partial u\le 0$. The sign split is the key structural choice: when the sampled ratio is already above $1$, the adaptive rule should only tighten the upper side of the interval; when it is below $1$, it should only tighten the lower side. Otherwise the method would also suppress pull-back moves that already head back toward the behavior policy.

\textbf{Effective contraction factors.} The contraction is applied only on the high-staleness tail:
\begin{equation}
c_{\pm,b,t}=
\begin{cases}
\psi(u_{\pm,b,t};q), & q>0\ \text{and}\ |d^{\mathrm{sg}}_{b,t}|>q,\\[2pt]
1, & \text{otherwise},
\end{cases}
\label{eq:gate}
\end{equation}
which yields the adaptive radii:
\begin{equation}
\tilde\varepsilon_{\mathrm{low},b,t}=\elow\,c_{-,b,t},
\qquad
\tilde\varepsilon_{\mathrm{high},b,t}=\ehigh\,c_{+,b,t}.
\label{eq:adaptive-eps}
\end{equation}
Since $0<c_{\pm,b,t}\le 1$, these radii never expand the original PPO interval. Ungated tokens recover the baseline radii exactly, while gated positive-mismatch tokens only shrink the upper side and gated negative-mismatch tokens only shrink the lower side.
\begin{itemize}
\item \textbf{SAT core mechanism.} \satr{} is conservative only where the sampled mismatch is unusually large and only on the side that would move the sampled ratio farther away from one. The explicit $q>0$ branch avoids division by zero when most mismatch values vanish, and under the inverse-cumulative distribution function convention the strict gate $|d|>q$ selects at most the top decile of active positions.
\end{itemize}

\textbf{SAT sampled objective.} \satr{} then obtains the final sampled surrogate by dropping the adaptive radii into the original clipped objective:
\begin{align}
\bar r^{\satr}_{b,t}&=\clip\!\big(r_{b,t},\,1-\tilde\varepsilon_{\mathrm{low},b,t},\,1+\tilde\varepsilon_{\mathrm{high},b,t}\big),
\label{eq:satr-clip}\\
\mathcal S^{\satr}_{b,t}&=\min\!\big(r_{b,t}\hat A_{b,t},\;\bar r^{\satr}_{b,t}\hat A_{b,t}\big).
\label{eq:satrloss}
\end{align}
This substitution shows that \satr{} is not a new objective family but a drop-in replacement for PPO clipping: it replaces $\bar r^{\mathrm{PPO}}_{b,t}$ by $\bar r^{\satr}_{b,t}$ inside the same sampled surrogate. When $c_{\pm,b,t}=1$, the underlying base objective is recovered exactly. Because the sign-selected endpoint moves inward, the adaptive interval remains contained in PPO's nominal interval, while the untouched side matches PPO. Pull-back branches therefore stay unchanged, and only the outward band between the adaptive and nominal boundaries is newly clipped.

\paragraph{Sequence-level objectives.}
Equations~\eqref{eq:satr-d}--\eqref{eq:satrloss} describe the token-ratio implementation. For the reported GSPO extension, the scalar in the clipped objective and in the \satr{} score is instead the length-normalized sequence ratio:
\begin{equation}
\rho^{\mathrm{seq}}_{b}=\exp\!\Big(\frac{1}{T_b}\sum_{t=1}^{T_b}\log r^{\mathrm{tok}}_{b,t}\Big),
\qquad
r^{\mathrm{tok}}_{b,t}=\frac{\pi^{(j)}(y_{b,t}\mid s_{b,t})}{\mu_{b,t}(y_{b,t}\mid s_{b,t})},
\label{eq:gspo-ratio}
\end{equation}
broadcast to the response's active token positions before the same active-token quantile and objective aggregation are applied. The empirical quantile therefore length-weights sequences, and every token of one response receives the same sequence-ratio score. 

\subsection{Update Geometry of \satr}
\label{sec:satr-mechanism}
 
The construction of Section~\ref{sec:satr-method} admits an exact local characterization, at the same surrogate level as Section~\ref{sec:ratio-envelope}: \satr{} does not impose a hard all-action trust region; its entire effect on the sampled objective is confined to one sign-selected outward band per gated token. The following proposition collects the interval geometry, the pointwise ordering, and the exact locus of change relative to Equation~\eqref{eq:ppo-objective}.
 
\begin{proposition}[\textbf{Update geometry of \satr}]
\label{prop:geometry}
\label{prop:containment}
\label{prop:pessimism}
Fix a sampled token with ratio $r$, advantage $\hat A$, and contraction factors $c_{\pm,b,t}$ from Equation~\eqref{eq:gate}, and define
$\I^{\mathrm{PPO}}=[1-\elow,\,1+\ehigh]$ and
$\I^{\satr}_{b,t}=[1-\elow c_{-,b,t},\,1+\ehigh c_{+,b,t}]$
for positive base radii. Then:
(i)~\emph{Containment:} $\I^{\satr}_{b,t}\subseteq\I^{\mathrm{PPO}}$; the intervals coincide when the gate is off, and when it fires, the endpoint selected by $\operatorname{sign}(d_{b,t})$ moves strictly inward while the opposite endpoint is unchanged.
(ii)~\emph{Pointwise pessimism:} $\mathcal S^{\satr}(r,\hat A)\le\mathcal S^{\mathrm{PPO}}(r,\hat A)$.
(iii)~\emph{Exact locus of change:} holding the stopped clip limits fixed and away from the clipping kinks, the derivatives with respect to $r$ differ exactly on the newly clipped outward bands
\begin{equation}
\{\hat A>0,\ 1+\ehigh c_{+,b,t}<r<1+\ehigh\}
\;\vee\;
\{\hat A<0,\ 1-\elow<r<1-\elow c_{-,b,t}\},
\label{eq:bands}
\end{equation}
where \satr{} has zero derivative and PPO retains its outward derivative $\hat A$. Pull-back branches are unchanged, and ratios already outward-clipped by PPO have zero derivative under both objectives.
\end{proposition}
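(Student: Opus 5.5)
The plan is to reduce everything to elementary facts about the kernel $\psi$ and the piecewise-linear structure of $\min(r\hat A,\clip(r,\ell,h)\hat A)$ for a fixed interval $[\ell,h]$ with $\ell<1<h$. For (i) I would first show $0<c_{\pm,b,t}\le 1$ directly from Equation~\eqref{eq:kernel}--\eqref{eq:gate}: $\psi(u;q)\in(0,1]$ for $u\ge 0$, $q>0$, with equality iff $u=0$. When the gate is off both factors equal $1$, so the intervals coincide. When it fires we have $q>0$ and $|d|>q>0$, so exactly one of $u_{+},u_{-}$ is positive, namely $u_{\operatorname{sign}(d)}=|d|>q$. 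Hence $c_{\operatorname{sign}(d)}=\psi(|d|;q)<\tfrac12<1$, while the opposite factor is $\psi(0;q)=1$. Multiplying by the positive base radii gives strict inward motion of the selected endpoint only, and therefore containment.

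For (ii) I would use the standard rewriting that, for $\hat A\ge 0$, $\min(r\hat A,\clip(r,\ell,h)\hat A)=\hat A\min(r,h)$, because the lower clip never binds from below in the minimum when $\hat A\ge0$. Symmetrically, for $\hat A<0$ it equals $\hat A\max(r,\ell)$. Both forms need only $\ell\le 1\le h$ in the sense that $\clip(r,\ell,h)\le \min(r,h)$ fails only when $r<\ell$, where $\min$ then picks $r$; I would verify this identity by a three-case split $r<\ell$, $\ell\le r\le h$, $r>h$. Given (i), $h^{\satr}\le h^{\mathrm{PPO}}$ and $\ell^{\satr}\ge\ell^{\mathrm{PPO}}$. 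So $\hat A\min(r,h)$ is monotone nondecreasing in $h$ for $\hat A\ge0$, and $\hat A\max(r,\ell)$ is nonincreasing in $\ell$ for $\hat A<0$. This yields the pointwise inequality.

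For (iii) I would differentiate the same closed forms with the limits held fixed. For $\hat A>0$ the derivative is $\hat A\,\mathbf 1\{r<h\}$; for $\hat A<0$ it is $\hat A\,\mathbf 1\{r>\ell\}$; for $\hat A=0$ it is zero under both objectives. Comparing the two indicators shows the derivatives differ exactly when $h^{\satr}<r<h^{\mathrm{PPO}}$ (for $\hat A>0$) or $\ell^{\mathrm{PPO}}<r<\ell^{\satr}$ (for $\hat A<0$). These are precisely the bands in Equation~\eqref{eq:bands}, on which \satr{} gives $0$ and PPO gives $\hat A$. The same indicators show that pull-back regions are identical ($r<h$ for positive advantage, $r>\ell$ for negative advantage, below the smaller endpoint), and that $r>h^{\mathrm{PPO}}$ or $r<\ell^{\mathrm{PPO}}$ gives zero derivative under both objectives.

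The main obstacle is bookkeeping rather than depth. First, I must justify the closed-form collapse of the $\min$ carefully, including the case where $r$ lies outside $[\ell,h]$ on the non-binding side. Second, I must note that only one band can be nonempty per token, because (i) moves only one endpoint. This is what makes the "one sign-selected outward band" phrasing correct. The kinks $r=h$ and $r=\ell$ are excluded, as the statement specifies.
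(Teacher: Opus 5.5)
Your proposal is correct and follows the paper's proof. Both arguments rewrite the clipped surrogate as $\hat A\min\{r,h\}$ for $\hat A>0$ and as $\hat A\max\{r,\ell\}$ for $\hat A<0$, then read off the ordering and the differing derivative band from the inward-moved endpoint. You are more careful than the paper in two places: you verify that rewriting by cases, and you use $\psi(0;q)=1$ to show the opposite endpoint stays fixed, which the paper's proof of (i) leaves implicit.
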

 
\textbf{Proof.}
(i) Equation~\eqref{eq:gate} gives $0<c_{\pm,b,t}\le 1$, so multiplying a positive base radius by $c_{\pm,b,t}$ can only move the corresponding boundary toward $1$ or leave it fixed; strictness on the gated side follows from $\psi(u;q)<1$ at $u>q>0$.
(ii)--(iii) For $\hat A>0$, compare $\mathcal S^{\mathrm{PPO}}=\hat A\min\{r,1+\ehigh\}$ with $\mathcal S^{\satr}=\hat A\min\{r,1+\ehigh c_{+,b,t}\}$: since $c_{+,b,t}\le 1$, the adaptive boundary is no larger, which gives the ordering, and the two functions and their derivatives differ exactly on $1+\ehigh c_{+,b,t}<r<1+\ehigh$. The case $\hat A<0$ is symmetric with $\max$ and the lower boundary, confining the difference to $1-\elow<r<1-\elow c_{-,b,t}$, and $\hat A=0$ is immediate.

 \paragraph{Remark: local mechanism and its empirical signature.}
The mechanism of \satr{} operates locally at the surrogate level by modulating the gradient contributions of individual positions within each batch. Rather than shifting or lowering the average logged mismatch band, its predicted empirical signature is tail suppression, which is characterized by fewer runaway outward updates and the absence of late-stage collapse. By selectively adjusting these extreme deviations, \satr{} ensures that the optimization curves safely track their base recipes without undergoing catastrophic policy degradation. The comprehensive theoretical foundation of this mechanism, including all formal proofs and empirical monitoring quantities, is fully detailed in Appendix~\ref{app:monitor}.
\section{Experiments}
\label{sec:exp}

\subsection{Setup}
\label{sec:exp-setup}

\paragraph{Pipeline and model.}
All experiments run on the slime batch-wise asynchronous RL framework~\citep{zhu2025slime} with SGLang~\citep{zheng2023sglang} as the rollout engine and Megatron~\citep{shoeybi2019megatron} as the trainer, on fully decoupled GPU pools (Figure~\ref{fig:pipeline}). The backbone is Qwen3-30B-A3B-Base~\citep{yang2025qwen3}, an MoE model chosen deliberately: it activates the policy-update \emph{and} implementation mismatch terms of Equation~\eqref{eq:split}, including MoE routing. The configured lag is controlled at $n\in\{1,8\}$ trainer steps per weight broadcast, so the experimental comparisons in the main text are always within this batch-wise async regime rather than across synchronization regimes.

\paragraph{Training data, batching, and sampling.}
Training prompts total $30{,}712$ by combining {DAPO-Math-17k}~\citep{yu2025dapoopensourcellmreinforcement} and {Dolci-RL-Zero-Math-7B}~\citep{olmo2026olmo3}  and shuffled rollout. Each iteration consumes $256$ prompts with $16$ samples each, yielding $4096$ responses for a GRPO group size of $16$ across $544$ total rollout iterations. Rollouts are sampled at temperature $0.95$, top-$p$ of $1.0$, and top-$k$ of $-1$ to retain full vocabulary support as required by Lemma~\ref{lem:tv-ratio} under a $32$k-token response limit. The reward is based on the rule-based verifier. And clip radii are symmetric at $0.2$ with a constant learning rate of $10^{-6}$. \satr{} uses the hill kernel with exponent $2$ with reference quantile $0.90$. Appendix~\ref{app:expdetails} lists full hyper-parameters, hardware, and parallelism, while metrics and statistical scope are deferred to Appendix~\ref{app:metrics-scope}.

\subsection{Main-result}
\label{sec:exp-main}

\begin{table*}[t]
\centering
\caption{\textbf{Main evaluation results.} This table presents the main performance eval results and the last-epoch sampled mismatch dpi, where parenthesized values mark the best checkpoint steps. The table also notes the deep red entries that later experience collapse, with  \textbf{bold} and \underline{underline} mark the best and runner-up within each metric column.
}
\label{tab:main-combined}
\small
\setlength{\tabcolsep}{5pt}
\begin{tabular}{@{}lcccc@{}}
\toprule
Method & lag $=1$ & lag $=8$ & lag $=1$ & lag $=8$ \\
- & \multicolumn{2}{c}{AIME24 $\uparrow$} & \multicolumn{2}{c}{$\dpi\downarrow$} \\
\midrule
Qwen3-30B-A3B-Base & 9.38 & 9.38 & --- & --- \\
GRPO~\citep{shao2024deepseekmath} & 31.25 (429) & \textcolor{cCollapse}{30.17 (429 collapse)} & 0.0103 & 0.0110 \\
GSPO~\citep{zheng2025gspo} & 32.25 (474) & \textcolor{cCollapse}{31.46 (424 collapse)} & 0.0097 & 0.0109 \\
GRPO w/ R3 & 31.83 (314) & 29.58 (214) & 0.0058 & 0.0218 \\
GSPO w/ R3 & 34.00 (379) & \underline{33.13} (214) & 0.0091 & 0.0158 \\
GRPO w/ TIS & 31.62 (254) & 31.46 (374) & 0.0108 & 0.0169 \\
GSPO w/ TIS & 32.88 (319) & 31.21 (279) & 0.0151 & 0.0208 \\
GRPO w/ TIS w/ R3 & 31.79 (324) & 31.79 (239) & \textbf{0.0048} & 0.0081 \\
GSPO w/ TIS w/ R3 & 31.88 (329) & 32.92 (359) & \underline{0.0054} & \textbf{0.0068} \\
DPPO~\citep{qi2026dppo} & 33.96 (324) & 32.71 (289) & 0.0108 & 0.0118 \\
\rowcolor{metabg!8}
\textbf{\satr-GRPO} & 32.71 (404) & 32.08 (299) & 0.0101 & 0.0108 \\
\rowcolor{metabg!8}
\textbf{\satr-GSPO} & \underline{34.17} (254) & 32.71 (359) & 0.0095 & 0.0107 \\
\rowcolor{metabg!8}
\textbf{\satr-GRPO w/ R3} & 33.75 (284) & \underline{33.13} (289) & 0.0056 & 0.0079 \\
\rowcolor{metabg!8}
\textbf{\satr-GSPO w/ R3} & \textbf{35.83} (379) & \textbf{34.79} (359) & 0.0056 & \underline{0.0076} \\
\bottomrule
\end{tabular}
\end{table*}

\paragraph{Baselines.}
We evaluate combinations of \textbf{GRPO}~\citep{shao2024deepseekmath} and \textbf{GSPO}~\citep{zheng2025gspo} across \textbf{vanilla}, \textbf{R3}~\citep{ma2025r3}, \textbf{TIS}~\citep{zheng2025tis}, and \textbf{combined stabilizer} configurations. This grid is augmented by four \satr variants and \textbf{DPPO}~\citep{qi2026dppo}, specifically the variant using $D_{\mathrm{TV}}$ to denote total variation divergence. Appendix~\ref{app:stabilizers} unifies these methods within one design space. All configurations share identical optimization, data, and rewards, varying only by stabilizer and configured lag. Appendix~\ref{app:expdetails} tabulates all reproducible hyperparameters.

\textbf{\satr achieves better performance in high staleness scenerio.} Table~\ref{tab:main-combined} shows that \textbf{\satr-GSPO w/ R3 ranks first at both lag settings, reaching $35.83$ at lag $1$ and $34.79$ at lag $8$}. The lead persists as the configured lag increases, indicating that the advantage is not confined to the milder asynchronous regime. Relative to GRPO and GSPO, the gains are $4.58$ and $3.58$ points at lag $1$, and $4.62$ and $3.33$ points at lag $8$; relative to DPPO, the margins are $1.87$ and $2.08$ points. Overall, the combination of \satr{}, GSPO, and R3 is the strongest configuration in the reported grid, including the more demanding lag-$8$ setting.

\textbf{\satr{} variant with top performance also maintains comparatively low observed staleness.} The $\dpi\downarrow$ columns show that sampled mismatch rises with configured lag for all methods, but the better-performing non-collapsed configurations occupy a lower mismatch band than the plain baselines. In particular, \textbf{\satr-GSPO w/ R3 records $\dpi=0.0056$ and $0.0076$, compared with $0.0097$ and $0.0109$ for GSPO}. Furthermore, we observe that under the vanilla configuration, both GRPO and GSPO exhibit training collapse respectively at 429 and 424 steps in high-staleness regimes. This highlights the severe risk of accelerating throughput, demonstrating its critical threat to training stability within reinforcement learning frameworks.

\subsection{Training--inference mismatch trajectories over training}
\label{sec:exp-drift}

\begin{figure}[tbp]
\centering
\includegraphics[width=0.85\linewidth]{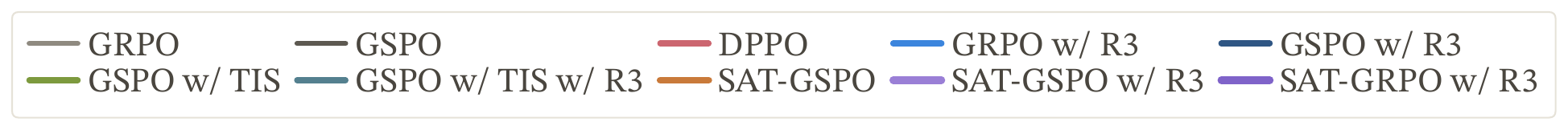}
\vspace{-0.2em}

\begin{subfigure}[t]{0.495\linewidth}
\centering
\includegraphics[width=\linewidth]{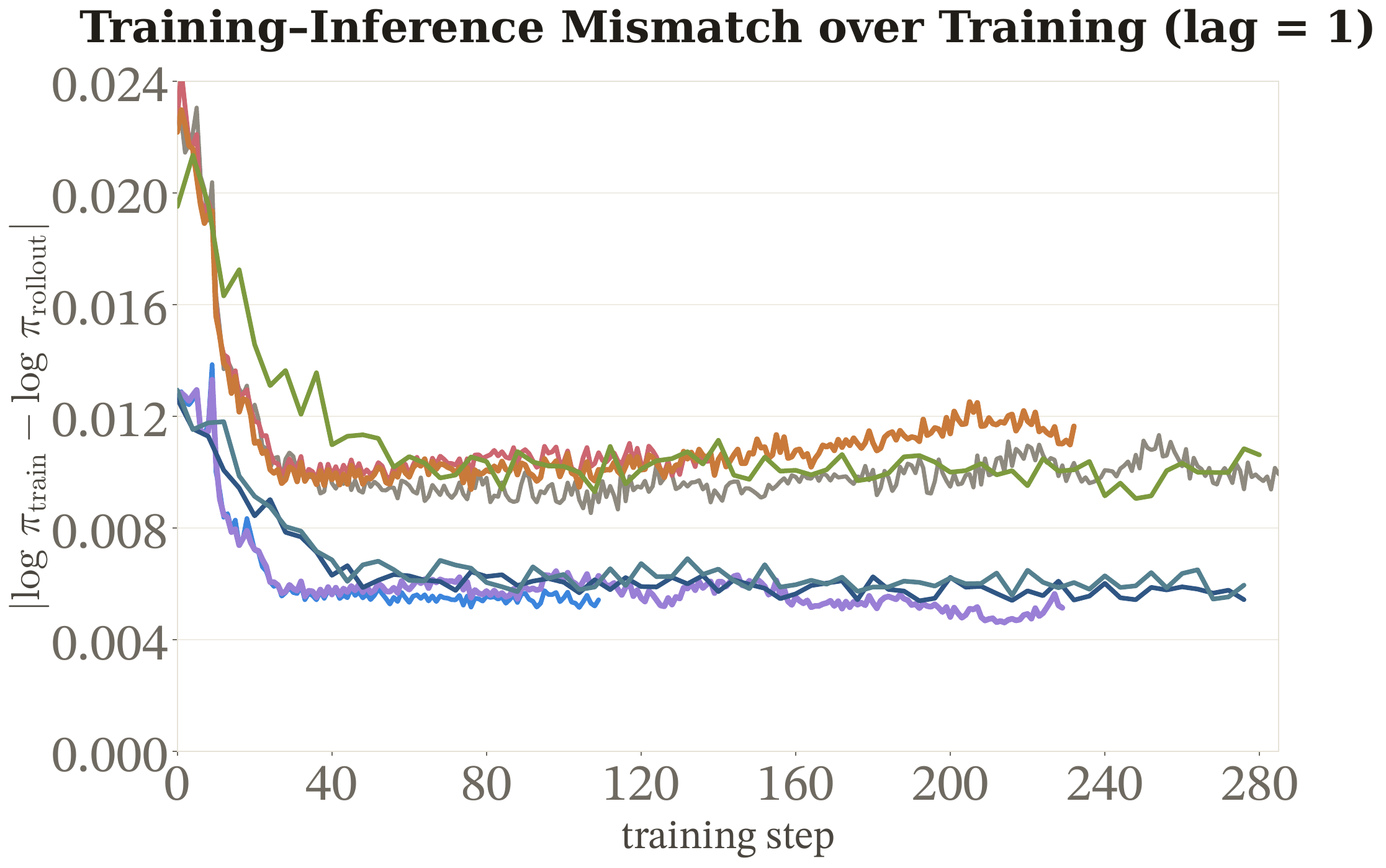}
\caption{configured lag $=1$}
\label{fig:drift-s1}
\end{subfigure}\hfill
\begin{subfigure}[t]{0.495\linewidth}
\centering
\includegraphics[width=\linewidth]{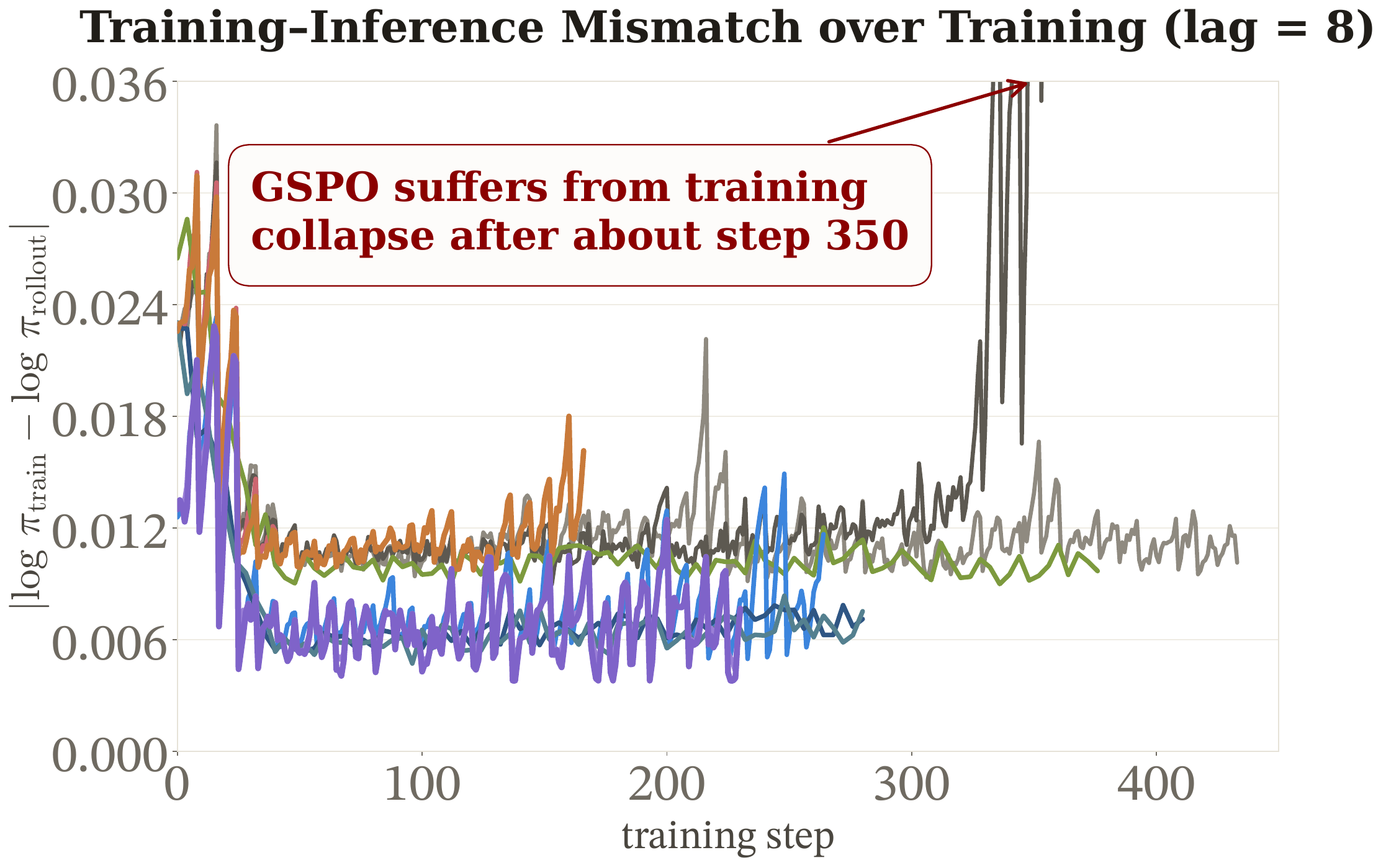}
\caption{configured lag $=8$}
\label{fig:drift-s8}
\end{subfigure}
\vspace{0.3em}
\caption{\textbf{Training--inference mismatch $\dpi$ over training at configured lag $1$ and $8$.} This figure tracks the batch-averaged sampled mismatch of variance methods across training, with a shared legend distinguishing configurations and separate panels contrasting the mild-lag setting with the high-lag setting.}
\label{fig:drift-pair}
\end{figure}

\textbf{Routing replay stabilizes the log prob difference while \satr{} thrives at lower values.} Figure~\ref{fig:drift-pair} tracks the mismatch diagnostic across variance methods behind Table~\ref{tab:main-combined}. 
The integration of routing routing replay (R3) is essential for stabilization: methods utilizing R3 restrict the log prob difference to a tight fluctuation around 0.007. 
Conversely, without R3, the metric deviates noticeably, drifting to around 0.011. 
Second, in the later stages of GRPO training, omitting R3 results in a sudden, sharp growth in the log prob difference. 
Moreover, \textbf{\satr{} achieves better performance while maintaining a lower log prob difference} without training collapse, proving its robustness in managing training dynamics without relying on excessive policy divergence.

\subsection{Asymmetric clipping in \satr{}}
\label{sec:exp-asymclip}

The benchmark table and the mismatch trajectories suggest that the stabilizers differ not only in strength but also in update geometry. A useful organizing principle is the asymmetric clipping family, which includes PPO, DPPO, SPO, DRPO, and \satr{}. For a fixed sampled token with ratio $r_t$ and advantage $\hat A_t$, the unclipped surrogate has derivative $\hat A_t$ with respect to $r_t$, so the sign of $\hat A_t$ determines the locally preferred direction. Relative to $r_t=1$, the sampled update is locally diverging when $\operatorname{sign}(\hat A_t(r_t-1))>0$ and locally converging when $\operatorname{sign}(\hat A_t(r_t-1))<0$. PPO therefore keeps the gate:
\begin{equation}
M_t^{\mathrm{PPO}}
=\mathbf 1\Bigl\{\operatorname{sign}(\hat A_t(r_t-1))\le 0\ \vee\ |r_t-1|\le\varepsilon\Bigr\},
\label{eq:exp-asym-ppo-mask}
\end{equation}
which clips only those updates that would move the sampled ratio farther away from one and preserves the updates that move it back toward the behavior reference.


DPPO retains the directional logic but updates the discriminator. The sampled total variation distance $D_t^{\mathrm{TV}}$ bounds the ratio deviation $|r_t-1|$, logically defining the induced gate $M_t^{\mathrm{DPPO}}$:
\begin{equation}
\begin{aligned}
D_t^{\mathrm{TV}} &=\bigl|\pi^{(j)}(y_t\mid s_t)-\mu(y_t\mid s_t)\bigr| =\mu(y_t\mid s_t)\,|r_t-1|, \\
|r_t-1| &\le \frac{\delta}{\mu(y_t\mid s_t)}, \\
M_t^{\mathrm{DPPO}} &=\mathbf 1\Bigl\{\operatorname{sign}(\hat A_t(r_t-1))\le 0\ \vee\ |r_t-1|\le\delta/\mu(y_t\mid s_t)\Bigr\}.
\end{aligned}
\label{eq:dppo_formulation}
\end{equation}

This gating mechanism imposes a more stringent constraint on high-probability head tokens, while granting a more permissive update allowance to rare, low-probability tokens residing in the long tail of the distribution.


 The results in Table~\ref{tab:main-combined} and Figure~\ref{fig:asym-dppo-bar} support this view. DPPO reaches $33.96$ at lag $1$ and $32.71$ at lag $8$, outperforming the GRPO baseline at both lags and avoiding the later collapse seen in GRPO and GSPO. This shows that an asymmetric gate with a token-adaptive $D_{\mathrm{TV}}$ threshold can already improve stability in the reported async regime, although DPPO still trails \satr-GSPO at lag $1$ and \satr-GSPO w/ R3 at both lags.

\begin{wrapfigure}{r}{0.5\textwidth}
\vspace{-1.5em}
\centering
\begin{tikzpicture}
\begin{axis}[
width=0.5\textwidth,
height=5cm,
ybar=2pt,
bar width=9.9pt,
ymin=30,
ymax=39,
symbolic x coords={GRPO,GSPO,DPPO,{SAT-GSPO},{SAT-GSPO w/ R3}},
xtick=data,
xticklabel style={font=\fontsize{11.25}{13.5}\selectfont,rotate=25,anchor=east},
yticklabel style={font=\fontsize{7.5}{9}\selectfont},
label style={font=\fontsize{10.5}{12}\selectfont},
legend style={font=\fontsize{11.25}{13.5}\selectfont,draw=none,fill=none,at={(0.5,1.03)},anchor=south,legend columns=2},
ylabel={AIME24 avg@8},
enlarge x limits=0.14,
axis line style={black!35},
tick style={black!35},
nodes near coords,
nodes near coords style={
    font=\fontsize{11.25}{13.5}\selectfont,
    rotate=90,
    anchor=west,
    inner sep=1pt,
    /pgf/number format/fixed,
    /pgf/number format/fixed zerofill,
    /pgf/number format/precision=2
},
every node near coord/.append style={xshift=0pt,yshift=0pt}
]
\addplot+[draw=none,fill=cGRPO] coordinates {(GRPO,31.25) (GSPO,32.25) (DPPO,33.96) ({SAT-GSPO},34.17) ({SAT-GSPO w/ R3},35.83)};
\addplot+[draw=none,fill=cSienna] coordinates {(GRPO,30.17) (GSPO,31.46) (DPPO,32.71) ({SAT-GSPO},32.71) ({SAT-GSPO w/ R3},34.79)};
\legend{lag $=1$, lag $=8$}
\end{axis}
\end{tikzpicture}
\caption{AIME24 avg@8 for GRPO, GSPO, DPPO, and \satr{} variants.}
\label{fig:asym-dppo-bar}
\vspace{-3.5em}
\end{wrapfigure}

Together with Figure~\ref{fig:drift-pair}, the comparison also shows that asymmetry alone is not sufficient: the rule used to set the effective boundary matters. DPPO and \satr{} both preserve pull-back updates and intercept outward ones, but \satr{} chooses the active boundary from the observed mismatch tail of the current batch. Equations~\eqref{eq:satr-clip} and~\eqref{eq:satrloss}, together with Propositions~\ref{prop:containment} and~\ref{prop:pessimism}, formalize this point: \satr{} maintains the asymmetric attribute while cutting off only the newly intercepted outward band. A broader method-by-method comparison is deferred to Appendix~\ref{app:stabilizers} and Appendices~\ref{app:gspo-seqclip}--\ref{app:rollout-lp}.

\FloatBarrier

\section{Open Questions}
\label{sec:limitations}

While the current experimental scope only focusing the setting of reinforcement learning for improving LLM's reasoning capability, the underlying SAT methodology demonstrates strong potential for broader application across diverse learning domains, leaving main limitations as follows:
\begin{itemize}
\item \textbf{Surrogate-level control.} \satr{} contracts a sampled nominal interval rather than the realized policy itself. The learned policy can still move outside that interval, unobserved vocabulary actions remain unconstrained, and the proxy $|\log r|$ should not be identified with either $D_{\mathrm{TV}}$ or version age because it omits the probability-mass weight in Equation~\eqref{eq:tv-mass} and mixes policy lag with implementation mismatch.
\item \textbf{Relative adaptation.} Because the reference quantile $q$ is recomputed for every batch, the method responds to the current mismatch tail rather than to an absolute scale. It therefore does not imply monotone shrinkage with configured lag once PPO's original boundary is already active.
\end{itemize}

\section{Conclusion}
\label{sec:conclusion}

The trust-region interpretation of \satr{} is narrow but clear. $D_{\mathrm{TV}}$ controls a finite-horizon approximation term, and $D_{\mathrm{TV}}$ is exactly a behavior expectation of absolute ratio deviation, so a hard all-action ratio bound would control $D_{\mathrm{TV}}$. PPO does not impose such a bound and instead clips an advantage-dependent sampled surrogate. \satr{} operates at that same surrogate level by using a detached mismatch proxy to contract the sign-selected nominal radius on the high-staleness tail of each batch. Its effect is therefore local but precise: it cuts off newly intercepted outward updates earlier while preserving pull-back gradients, and it reduces exactly to the underlying recipe when disabled. On the reported grid, \satr-GSPO w/ R3 attains the best-observed AIME24 avg@8 at both configured lags, reaches $35.83$ at lag $1$ and $34.79$ at lag $8$, and avoids the late collapse seen in several fixed-clip lag-$8$ baselines. Together with the interval-containment and pointwise-pessimism results, these findings give a concrete account of what an adaptive $\varepsilon$ can contribute in asynchronous RL.

\bibliographystyle{plainnat}
\bibliography{refs}

\newpage
\appendix

\section{Notation}
\label{app:notation}

\begin{table}[h]
\centering
\caption{Notation used throughout the paper.}
\label{tab:notation}
\small
\setlength{\tabcolsep}{6pt}
\begin{tabular}{@{}ll@{}}
\toprule
Symbol & Meaning \\
\midrule
$x,\ y_b,\ s_{b,t}$ & Prompt; response $y_b=(y_{b,1},\dots,y_{b,T_b})$; state $s_{b,t}=(x,y_{b,1},\dots,y_{b,t-1})$. \\
$b,\ t,\ j$ & Response index; token index; absolute trainer-update index. \\
$T_b,\ \mathcal T_{\mathrm{act}}$ & Length of response $b$; active (non-padding) token positions of the microbatch. \\
$\mu_{b,t}$ & Behavior policy, namely the distribution the rollout engine actually sampled token $(b,t)$ from. \\
$\pi^{(j)}$ & Current trainer policy at update $j$. \\
$\pi^{(\ell)}$ & Trainer-side reference log-probability at the rollout checkpoint. \\
$n,\ N_{b,t}$ & Configured pipeline lag (trainer steps per weight broadcast); realized per-token lag. \\
$r_{b,t},\ d_{b,t}$ & Sampled-token importance ratio and its log-ratio, $d_{b,t}=\log r_{b,t}$. \\
$\rho^{\mathrm{seq}}_b$ & GSPO length-normalized sequence ratio in Equation~\eqref{eq:gspo-ratio}. \\
$\hat A_{b,t},\ R(y),\ \xi$ & Advantage estimate; sequence reward; reward bound with $|R(y)|\le\xi$. \\
$\dtv$ & Per-state total variation $\tfrac12\sum_a|\mu(a\mid s)-\pi(a\mid s)|$. \\
$L'_{\mu}(\pi)$ & Finite-horizon linear surrogate in Equation~\eqref{eq:surrogate}. \\
$\elow,\ \ehigh$ & Base clip radii; $\tilde\varepsilon_{\mathrm{low}/\mathrm{high},b,t}$ are the \satr{} radii in Equation~\eqref{eq:adaptive-eps}. \\
$\psi(u;q),\ q,\ c_{\pm,b,t}$ & Hill kernel; detached microbatch quantile reference; gated contraction factors. \\
$\dpi$ & Sampled training--inference mismatch $\E_{(b,t)\in\mathcal T_{\mathrm{act}}}|\log r^{\mathrm{tok}}_{b,t}|$ from Equation~\eqref{eq:mismatch-metric}. \\
\bottomrule
\end{tabular}
\end{table}

\section{Synchronization Regimes of Asynchronous RL}
\label{app:regimes}

The main text now keeps only the batch-wise asynchronous pipeline in Figure~\ref{fig:pipeline}, because all reported experiments are conducted in that regime. This appendix records the two endpoint cases referenced there---Appendix~\ref{app:sync-rl} for synchronous RL and Appendix~\ref{app:fully-async} for fully asynchronous RL. They clarify why the configured lag is only a coarse systems label and why the realized mismatch can vary substantially even when the nominal training recipe stays fixed.

\subsection{Synchronous RL}
\label{app:sync-rl}

In synchronous RL, rollout and optimization alternate on one shared GPU pool and every batch is trained by the same parameter vector that produced it:
\begin{equation}
j=\ell,
\qquad
\theta_{\mathrm{roll}}=\theta_{\mathrm{train}}=\theta^{(\ell)},
\qquad
N_{b,t}=0,
\qquad
\mu_{b,t}=\pi^{(\ell)},
\qquad
r_{b,t}=1\ \text{a.s.}
\label{eq:sync-identity}
\end{equation}
At this pre-update behavior point, the sampled ratio carries no off-policy correction, and under the usual on-policy assumptions the surrogate gradient coincides with the corresponding policy-gradient estimator. This regime therefore removes staleness as a learning issue, but it pays a clear systems cost: generation, optimization, and weight broadcast are serialized, so the two engines never overlap.

\begin{figure}[h]
\centering
\includegraphics[width=0.88\linewidth]{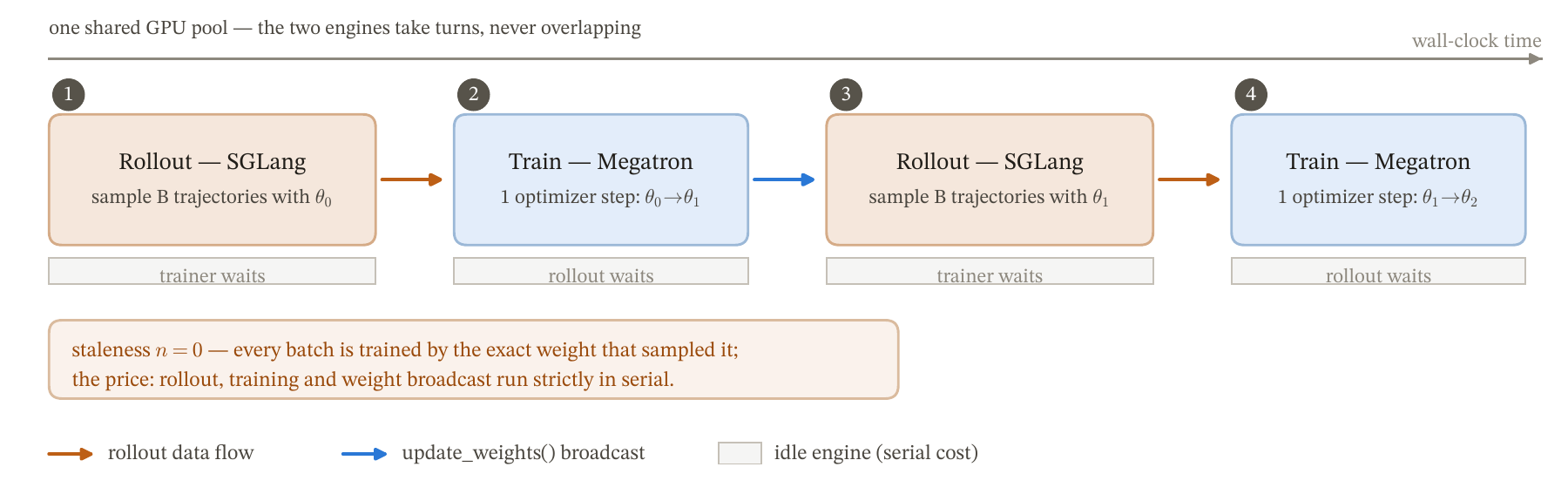}
\caption{Synchronous RL. Rollout and training share one GPU pool and alternate in time, so each batch is consumed by the same weight version that generated it.}
\label{fig:pipeline-sync-app}
\end{figure}

\subsection{Fully asynchronous RL}
\label{app:fully-async}

Fully asynchronous RL removes the fixed batchwise handoff. Rollouts stream continuously into a queue, the trainer pulls data whenever the queue has enough completed trajectories, and new weights can be hot-swapped into the rollout engine while a response is still being decoded. A single trajectory can therefore span several policy versions before it is even enqueued, and then wait under still newer versions before it is trained. A convenient decomposition writes the realized lag as:
\begin{equation}
N_{b,t}=N_{\mathrm{PQS},b,t}+N_{\mathrm{IQS},b,t},
\label{eq:fully-async-lag}
\end{equation}
where $N_{\mathrm{PQS},b,t}$ counts the versions that elapse while the trajectory is being generated and $N_{\mathrm{IQS},b,t}$ counts the versions that elapse while the completed trajectory waits in the queue. Following \citet{dong2026staleness}, the first term grows with the response-length tail of the workload and the second depends on queue occupancy and the rollout-to-trainer throughput ratio.

This regime is important conceptually because it makes the central quantity of the paper random at the token level. Even if the system exposes one nominal queue depth or one nominal update cadence, the relevant quantity for optimization is the realized sampled mismatch $d_{b,t}=\log r_{b,t}$, not a single preconfigured scalar. This is precisely the setting in which a batch-adaptive mechanism such as \satr{} becomes more natural than a rule keyed to a fixed lag alone.

\begin{figure}[h]
\centering
\includegraphics[width=0.88\linewidth]{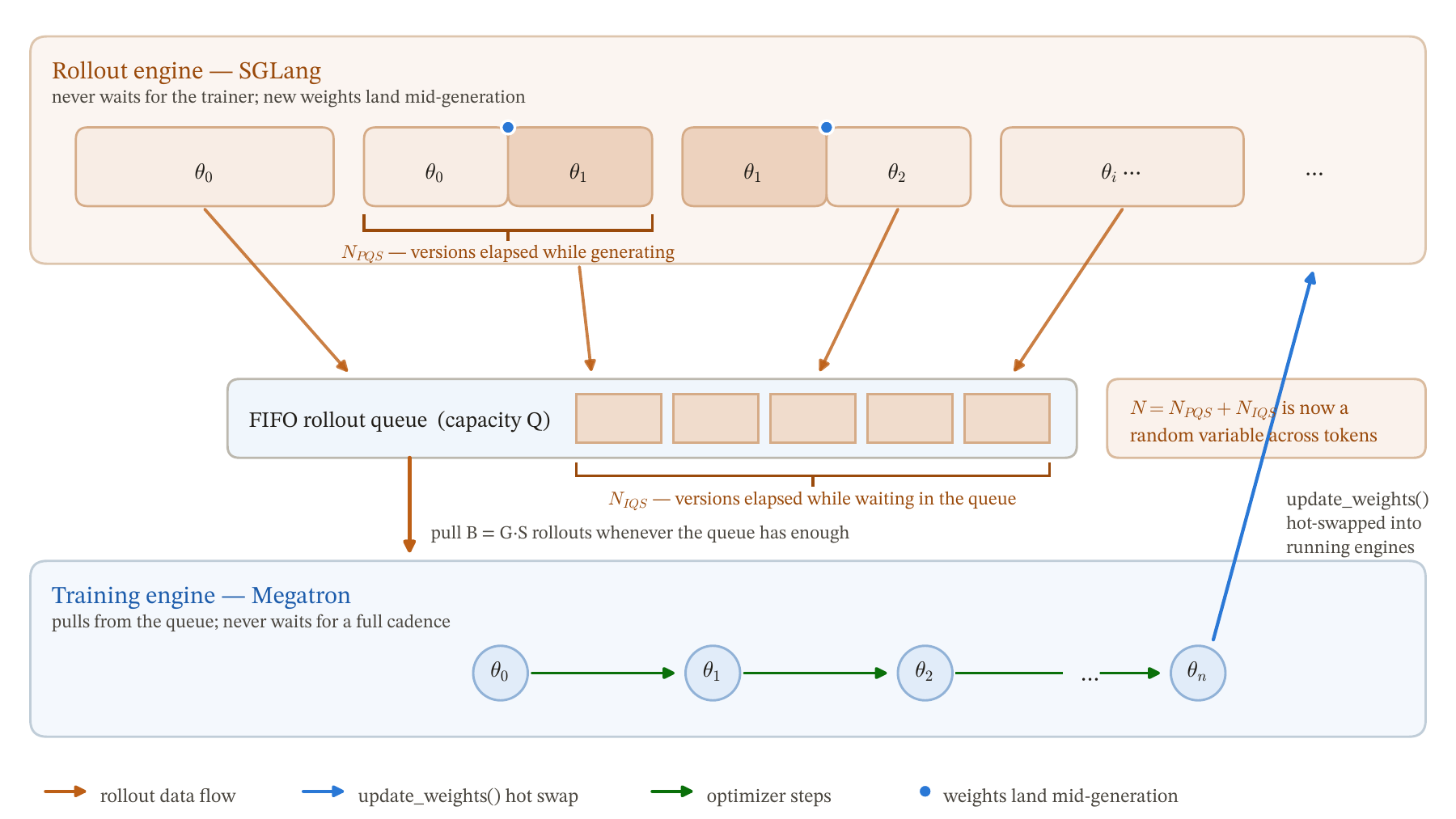}
\caption{Fully asynchronous RL. Rollouts stream through a queue, weights can change during generation, and the realized lag varies across tokens.}
\label{fig:pipeline-fully-app}
\end{figure}

\section{Mismatch Sources as Diagnostic Hypotheses}
\label{app:sources}

Equation~\eqref{eq:split} in the main text gives an exact two-term split of the sampled log-ratio. Write $\Delta_{b,t}^{\pi}=\log\pi^{(j)}-\log\pi^{(\ell)}$ for policy-update mismatch and $\Delta_{b,t}^{\mathrm{impl}}=\log\pi^{(\ell)}-\log\mu_{b,t}$ for implementation mismatch, both evaluated at $(y_{b,t},s_{b,t})$. Then:
\begin{equation}
d_{b,t}=\log r_{b,t}=\Delta_{b,t}^{\pi}+\Delta_{b,t}^{\mathrm{impl}}.
\label{eq:app-split}
\end{equation}
The term $\Delta_{b,t}^{\pi}$ records policy-update mismatch inside the asynchronous window. The term $\Delta_{b,t}^{\mathrm{impl}}$ records implementation mismatch between the trainer-side reference and the actual rollout engine. The purpose of the present section is not to produce a unique additive decomposition of all sources of mismatch, because such a decomposition would require additional intermediate distributions that the experiments do not log. Instead, the section records the hypotheses that are operationally useful when interpreting the measured $d_{b,t}$.

Policy updates are the most direct source. If a rollout is produced under $\theta^{(\ell)}$ and consumed after several optimizer steps, then a first-order expansion along the optimization path gives:
\begin{equation}
\log\pi^{(j)}(y_{b,t}\mid s_{b,t})-\log\pi^{(\ell)}(y_{b,t}\mid s_{b,t})
\;\approx\;\sum_{k=\ell}^{j-1}\eta\,\big\langle\nabla_\theta\log\pi(y_{b,t}\mid s_{b,t}),\,g^{(k)}\big\rangle,
\label{eq:taylor-app}
\end{equation}
with learning rate $\eta$ and update directions $g^{(k)}$. This term is exactly the object targeted by PPO-style importance reweighting. It need not be centered at zero, because consecutive updates tend to move in reward-improving directions rather than cancel each other.

Implementation mismatch remains even when the nominal weights are identical. Rollout and training may differ in fused kernels, attention kernels, KV-cache layout, numerical precision, or mixture-of-experts routing. The router case is particularly delicate because the Top-$K$ mask is a discontinuous function of the router logits. Small numerical differences can activate a different expert set and change the forward pass abruptly. Hardware differences can amplify the same effect: the same model and code path can produce train--rollout KL values that differ by orders of magnitude across GPU generations. The paper therefore treats these mechanisms as diagnostic hypotheses that explain observed mismatch and motivate stabilizers such as R3, rather than as terms of an exact decomposition used in the proof.

\section{Derivation of the Finite-Horizon Improvement Bound}
\label{app:improvement-bound}

This section collects the full derivation behind the finite-horizon trust-region discussion in Section~\ref{sec:tv}. We keep the notation of the main paper. A sequence $y=(y_1,\dots,y_T)$ is generated autoregressively from prompt $x$, the behavior policy is $\mu$, the target policy is $\pi$, rewards satisfy $|R(y)|\le\xi$, and the finite-horizon objective is $\mathcal J(\pi)=\mathbb E_{y\sim\pi}[R(y)]$. The surrogate used in the main text is:
\begin{equation}
L'_{\mu}(\pi)=\mathbb E_{y\sim\mu}\left[R(y)\sum_{t=1}^{T}\left(\frac{\pi(y_t\mid s_t)}{\mu(y_t\mid s_t)}-1\right)\right].
\label{eq:app-surrogate}
\end{equation}

\subsection{An exact performance-difference identity}
\label{app:exact-identity}

\begin{lemma}[Exact finite-horizon identity]
\label{lem:app-identity}
For any two sequence policies $\mu$ and $\pi$ with compatible support on the $\mu$-reachable states,
\begin{equation}
\mathcal J(\pi)-\mathcal J(\mu)
= L'_{\mu}(\pi)-\Delta(\mu,\pi),
\label{eq:app-identity}
\end{equation}
where
\begin{align}
L'_{\mu}(\pi)
&= \mathbb E_{y\sim\mu}\left[R(y)\sum_{t=1}^{T}\left(\frac{\pi(y_t\mid s_t)}{\mu(y_t\mid s_t)}-1\right)\right],
\label{eq:app-identity-surrogate}\\
\Delta(\mu,\pi)
&= \mathbb E_{y\sim\mu}\left[R(y)\sum_{t=1}^{T}\left(\frac{\pi(y_t\mid s_t)}{\mu(y_t\mid s_t)}-1\right)\left(1-\prod_{j=t+1}^{T}\frac{\pi(y_j\mid s_j)}{\mu(y_j\mid s_j)}\right)\right].
\label{eq:app-delta}
\end{align}
\end{lemma}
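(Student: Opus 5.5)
The plan is to prove the identity by direct algebra on the integrand, with no approximation step. First I combine the two terms. Writing $r_t=\pi(y_t\mid s_t)/\mu(y_t\mid s_t)$, the integrand of $L'_{\mu}(\pi)-\Delta(\mu,\pi)$ is
\[
R(y)\sum_{t=1}^{T}(r_t-1)\Bigl[1-\Bigl(1-\prod_{j=t+1}^{T}r_j\Bigr)\Bigr]
=R(y)\sum_{t=1}^{T}(r_t-1)\prod_{j=t+1}^{T}r_j .
\]
Both expectations are finite, since $R$ is bounded and the horizon is fixed (padded with end-of-sequence tokens if needed). Each importance weight is integrable under $\mu$, so combining the two terms under one expectation is legitimate.

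The key step is a telescoping identity. Define the suffix products $P_t=\prod_{j=t}^{T}r_j$ for $1\le t\le T$ and set $P_{T+1}=1$ (the empty product). Then
\[
(r_t-1)\prod_{j=t+1}^{T}r_j=P_t-P_{t+1},
\qquad\text{so}\qquad
\sum_{t=1}^{T}(r_t-1)\prod_{j=t+1}^{T}r_j=P_1-1 .
\]
Therefore $L'_{\mu}(\pi)-\Delta(\mu,\pi)=\E_{y\sim\mu}\bigl[R(y)\,(P_1-1)\bigr]=\E_{y\sim\mu}\bigl[R(y)\,P_1\bigr]-\J(\mu)$.

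The last step uses importance sampling. By the autoregressive factorization $\pi(y\mid x)=\prod_t\pi(y_t\mid s_t)$, the full-sequence likelihood ratio is exactly $P_1=\pi(y\mid x)/\mu(y\mid x)$. This gives $\E_{y\sim\mu}[R(y)P_1]=\E_{y\sim\pi}[R(y)]=\J(\pi)$, which proves the claim.

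The only delicate point is justifying this change of measure. I need sequence-level absolute continuity $\pi(\cdot\mid x)\ll\mu(\cdot\mid x)$, and I obtain it from the assumed per-state condition $\pi(\cdot\mid s)\ll\mu(\cdot\mid s)$ on $\mu$-reachable states by induction on $t$. Suppose a prefix has positive $\pi$-probability. By the inductive hypothesis it also has positive $\mu$-probability, so its state is $\mu$-reachable. Then any next token with positive $\pi$-probability has positive $\mu$-probability at that state. It follows that $\pi$ puts no mass on sequences that $\mu$ cannot generate. In particular, the ratio is only ever evaluated on sequences with positive $\mu$-probability, so $\mu$-unreachable states never enter the expectation. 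I expect this support bookkeeping, rather than the telescoping, to be the main thing requiring care. Note also that the identity treats $R$ as a function of the complete sequence, which is why no per-step reward decomposition is needed.
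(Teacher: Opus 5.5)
Your proof is correct and is essentially the paper's argument run in reverse. The paper telescopes $\pi(y\mid x)-\mu(y\mid x)$ with $\mu$-prefixes and $\pi$-suffixes, factors out $\mu(y\mid x)$ to obtain exactly your identity $\sum_{t}(r_t-1)\prod_{j>t}r_j=P_1-1$ inside the expectation, and then adds and subtracts the term with the future product replaced by $1$. Your induction deriving sequence-level absolute continuity from the per-state condition makes explicit a support step that the paper uses without comment when it divides by $\mu(y\mid x)$.
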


\textbf{Proof.}
Start from the definition of the return difference:
\begin{align*}
\mathcal J(\pi)-\mathcal J(\mu)
&=\mathbb E_{y\sim\pi}[R(y)]-\mathbb E_{y\sim\mu}[R(y)]\\
&=\sum_{y}\bigl(\pi(y\mid x)-\mu(y\mid x)\bigr)R(y).
\end{align*}
The sequence-probability difference admits the telescoping expansion:
\begin{equation*}
\pi(y\mid x)-\mu(y\mid x)
=\sum_{t=1}^{T}\left(\prod_{k=1}^{t-1}\mu(y_k\mid s_k)\right)
\bigl(\pi(y_t\mid s_t)-\mu(y_t\mid s_t)\bigr)
\left(\prod_{j=t+1}^{T}\pi(y_j\mid s_j)\right).
\end{equation*}
Substituting this identity and factoring out $\mu(y\mid x)$ gives:
\begin{align*}
\mathcal J(\pi)-\mathcal J(\mu)
&=\sum_{y}\mu(y\mid x)R(y)
\sum_{t=1}^{T}\left(\frac{\pi(y_t\mid s_t)}{\mu(y_t\mid s_t)}-1\right)
\left(\prod_{j=t+1}^{T}\frac{\pi(y_j\mid s_j)}{\mu(y_j\mid s_j)}\right)\\
&=\mathbb E_{y\sim\mu}\left[R(y)
\sum_{t=1}^{T}\left(\frac{\pi(y_t\mid s_t)}{\mu(y_t\mid s_t)}-1\right)
\left(\prod_{j=t+1}^{T}\frac{\pi(y_j\mid s_j)}{\mu(y_j\mid s_j)}\right)
\right].
\end{align*}
Adding and subtracting the term with the future-product replaced by $1$ yields:
\begin{align*}
\mathcal J(\pi)-\mathcal J(\mu)
&=\mathbb E_{y\sim\mu}\left[R(y)\sum_{t=1}^{T}\left(\frac{\pi(y_t\mid s_t)}{\mu(y_t\mid s_t)}-1\right)\right]\\
&\quad-\mathbb E_{y\sim\mu}\left[R(y)\sum_{t=1}^{T}\left(\frac{\pi(y_t\mid s_t)}{\mu(y_t\mid s_t)}-1\right)
\left(1-\prod_{j=t+1}^{T}\frac{\pi(y_j\mid s_j)}{\mu(y_j\mid s_j)}\right)\right],
\end{align*}
which is exactly Equations~\eqref{eq:app-identity-surrogate} and~\eqref{eq:app-delta}.

\subsection{A sequence-level total-variation bound}
\label{app:sequence-tv}

\begin{lemma}[Sequence $D_{\mathrm{TV}}$ is bounded by cumulative one-step $D_{\mathrm{TV}}$]
\label{lem:app-sequence-tv}
Let $\mu_T(\cdot\mid s_1)$ and $\pi_T(\cdot\mid s_1)$ denote the induced sequence distributions over responses of length $T$. Then
\begin{equation}
D_{\mathrm{TV}}\bigl(\mu_T(\cdot\mid s_1)\,\|\,\pi_T(\cdot\mid s_1)\bigr)
\le
\sum_{t=1}^{T}\mathbb E_{s_t\sim\mu}\Big[D_{\mathrm{TV}}\bigl(\mu(\cdot\mid s_t)\,\|\,\pi(\cdot\mid s_t)\bigr)\Big].
\label{eq:app-sequence-tv}
\end{equation}
\end{lemma}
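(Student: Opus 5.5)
The plan is a hybrid (interpolating) sequence argument that telescopes the sequence-level total variation into per-step contributions, in the same spirit as the telescoping expansion in the proof of Lemma~\ref{lem:app-identity}. For $t=0,\dots,T$ define the hybrid sequence distribution $h_t$ that samples $y_1,\dots,y_t$ from $\mu$ and $y_{t+1},\dots,y_T$ from $\pi$. Then $h_0=\pi_T(\cdot\mid s_1)$ and $h_T=\mu_T(\cdot\mid s_1)$, and the triangle inequality for total variation gives
\begin{equation*}
D_{\mathrm{TV}}\bigl(\mu_T\,\|\,\pi_T\bigr)\le\sum_{t=1}^{T}D_{\mathrm{TV}}(h_t\,\|\,h_{t-1}).
\end{equation*}
The argument then reduces to bounding each consecutive pair by the expected one-step divergence at position $t$.

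Fix $t$. The distributions $h_t$ and $h_{t-1}$ share the prefix law: both draw $y_1,\dots,y_{t-1}$ from $\mu$, so $s_t$ has the $\mu$-induced law in both. They also share the suffix kernel: given $(s_t,y_t)$, both draw $y_{t+1},\dots,y_T$ from $\pi$. They differ only in the kernel for $y_t$, namely $\mu(\cdot\mid s_t)$ versus $\pi(\cdot\mid s_t)$. Writing the sum defining $D_{\mathrm{TV}}$ and factoring out the common prefix probability gives
\begin{equation*}
D_{\mathrm{TV}}(h_t\,\|\,h_{t-1})=\tfrac12\,\mathbb E_{s_t\sim\mu}\Big[\sum_{y_t}\big|\mu(y_t\mid s_t)-\pi(y_t\mid s_t)\big|\sum_{y_{>t}}\pi(y_{>t}\mid s_t,y_t)\Big].
\end{equation*}
The inner suffix sum equals $1$, which leaves exactly $\mathbb E_{s_t\sim\mu}\big[D_{\mathrm{TV}}(\mu(\cdot\mid s_t)\,\|\,\pi(\cdot\mid s_t))\big]$. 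Equivalently, this is the data-processing inequality for the shared suffix kernel, combined with the observation that the joint TV of two laws with a common first marginal is the average of the conditional TVs. Summing over $t$ yields Equation~\eqref{eq:app-sequence-tv}.

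The main obstacle is the bookkeeping in the per-step equality, which I expect to be routine but not hard. Two points need care. First, the absolute value must be pulled inside only after factoring out the nonnegative common prefix and suffix probabilities. Second, the fixed-horizon convention (an end-of-sequence token followed by padding) must be invoked so that every sequence has length $T$ and the suffix kernel is a genuine probability distribution summing to one. No support condition is needed here, since total variation is defined on raw probabilities rather than on ratios.
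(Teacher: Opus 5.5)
Your proposal is correct and is essentially the paper's argument: your telescoping over the hybrid distributions $h_t$ is the same product-difference identity the paper uses, since $h_t(y)-h_{t-1}(y)=\bigl(\prod_{k<t}\mu(y_k\mid s_k)\bigr)\bigl(\mu(y_t\mid s_t)-\pi(y_t\mid s_t)\bigr)\bigl(\prod_{j>t}\pi(y_j\mid s_j)\bigr)$ is exactly its $t$-th summand. Your per-step computation also matches the paper's: you collapse the $\pi$-suffix sum to $1$ and read the $\mu$-prefix weight as an expectation over $s_t\sim\mu$.
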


\textbf{Proof.}
Write $P(y)=\mu_T(y\mid s_1)$ and $Q(y)=\pi_T(y\mid s_1)$. Then
\begin{equation*}
2D_{\mathrm{TV}}(P\|Q)=\sum_{y}|P(y)-Q(y)|.
\end{equation*}
Using the product-difference identity:
\begin{equation*}
a_1\cdots a_T-b_1\cdots b_T
=\sum_{t=1}^{T}\left(\prod_{k=1}^{t-1}a_k\right)(a_t-b_t)\left(\prod_{j=t+1}^{T}b_j\right)
\end{equation*}
and the triangle inequality,
\begin{align*}
2D_{\mathrm{TV}}(P\|Q)
&\le \sum_{t=1}^{T}\sum_{y}
\left(\prod_{k=1}^{t-1}\mu(y_k\mid s_k)\right)
\bigl|\mu(y_t\mid s_t)-\pi(y_t\mid s_t)\bigr|
\left(\prod_{j=t+1}^{T}\pi(y_j\mid s_j)\right).
\end{align*}
For each fixed $t$, summing over $y_{t+1},\dots,y_T$ collapses the trailing product to $1$, which leaves:
\begin{align*}
2D_{\mathrm{TV}}(P\|Q)
&\le \sum_{t=1}^{T}\sum_{y_1,\dots,y_{t-1}}
\left(\prod_{k=1}^{t-1}\mu(y_k\mid s_k)\right)
\sum_{y_t}\bigl|\mu(y_t\mid s_t)-\pi(y_t\mid s_t)\bigr|.
\end{align*}
The inner sum is $2D_{\mathrm{TV}}(\mu(\cdot\mid s_t)\|\pi(\cdot\mid s_t))$, while the outer sum is the expectation over $s_t$ induced by $\mu$. Dividing by $2$ proves the claim.

\subsection{Max-divergence and average-divergence forms}
\label{app:bound-proofs}

\begin{proposition}[Max-divergence and average-divergence bounds]
\label{prop:app-bounds}
Under the assumptions above,
\begin{equation}
\mathcal J(\pi)-\mathcal J(\mu)
\ge L'_{\mu}(\pi)-2\xi T(T-1)\bigl[D_{\mathrm{TV}}^{\max}(\mu,\pi)\bigr]^2,
\label{eq:app-max-bound}
\end{equation}
where $D_{\mathrm{TV}}^{\max}(\mu,\pi)=\sup_s D_{\mathrm{TV}}(\mu(\cdot\mid s)\|\pi(\cdot\mid s))$, and also
\begin{equation}
\mathcal J(\pi)-\mathcal J(\mu)
\ge L'_{\mu}(\pi)-4\xi\,\mathbb E_{y\sim\mu}\left[\sum_{t=1}^{T}D_{\mathrm{TV}}\bigl(\mu(\cdot\mid s_t)\,\|\,\pi(\cdot\mid s_t)\bigr)\right].
\label{eq:app-avg-bound}
\end{equation}
Equation~\eqref{eq:app-avg-bound} is the form reported in the main text as Equation~\eqref{eq:tvbound}.
\end{proposition}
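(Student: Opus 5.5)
The proof would start from the exact identity of Lemma~\ref{lem:app-identity}, $\mathcal J(\pi)-\mathcal J(\mu)=L'_\mu(\pi)-\Delta(\mu,\pi)$. Both inequalities then reduce to upper bounds on $|\Delta(\mu,\pi)|$.

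The key step is to rewrite $\Delta$ so that the future importance-product becomes a conditional expectation rather than a raw product of ratios. Fix $t$ and condition on the prefix $y_{1:t}$. Because $\mu$ samples $y_{t+1:T}$, the term
\[
\mathbb E_{y\sim\mu}\Bigl[R(y)\prod_{j>t}\tfrac{\pi(y_j\mid s_j)}{\mu(y_j\mid s_j)}\,\Big|\,y_{1:t}\Bigr]
\]
equals $\mathbb E_{\pi}[R\mid y_{1:t}]$. So the $t$-th summand of $\Delta$ equals
\[
\mathbb E_{y_{1:t}\sim\mu}\Bigl[\bigl(r_t-1\bigr)\bigl(V^\mu(y_{1:t})-V^\pi(y_{1:t})\bigr)\Bigr],
\]
where $r_t=\pi(y_t\mid s_t)/\mu(y_t\mid s_t)$ and $V^\nu(y_{1:t})=\mathbb E_{\nu}[R\mid y_{1:t}]$ denotes the conditional value under continuation policy $\nu$. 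Support compatibility (absolute continuity) justifies this change of measure.

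Two bounds on the value gap then finish the argument. The value difference $V^\mu-V^\pi$ is a difference of expectations of a function bounded by $\xi$ under the two continuation laws from $s_{t+1}$. It is therefore at most $2\xi$ times the total variation between those continuation sequence laws. Lemma~\ref{lem:app-sequence-tv}, applied from state $s_{t+1}$, bounds that total variation by $\sum_{k>t}\mathbb E[D_{\mathrm{TV}}(s_k)]$, which is at most $(T-t)D_{\mathrm{TV}}^{\max}$. Separately, $\mathbb E_{y_t\sim\mu}|r_t-1|=2D_{\mathrm{TV}}(s_t)$ by Lemma~\ref{lem:tv-ratio}, which is at most $2D_{\mathrm{TV}}^{\max}$.

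For the max form, multiply the two bounds. Each summand is at most $2D^{\max}\cdot 2\xi(T-t)D^{\max}$, and since $\sum_{t}(T-t)=T(T-1)/2$, the sum is $2\xi T(T-1)(D^{\max})^2$.

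For the average form, use instead the crude bound $|V^\mu-V^\pi|\le 2\xi$. Each summand is then at most $2\xi\,\mathbb E_\mu|r_t-1|=4\xi\,\mathbb E_\mu[D_{\mathrm{TV}}(s_t)]$, and summing over $t$ gives exactly Equation~\eqref{eq:app-avg-bound}.

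The main obstacle is the conditioning step: the future product must be handled as a conditional expectation, because bounding $|1-\prod_j r_j|$ directly is hopeless. A related subtlety is that after conditioning on $y_t$ the factor $r_t-1$ can be pulled out, but the value gap still depends on $y_t$. I would therefore take the supremum of the value gap over $y_t$ before averaging $|r_t-1|$, which is harmless because the value-gap bound is uniform in $y_t$.
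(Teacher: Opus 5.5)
Your proof is correct and follows essentially the paper's route: the exact identity of Lemma~\ref{lem:app-identity}, a bound on the $t$-th future factor by $2\xi$ times the total variation between the continuation laws from $s_{t+1}$, Lemma~\ref{lem:app-sequence-tv} to reach $(T-t)D_{\mathrm{TV}}^{\max}$, $\mathbb E_\mu|r_t-1|=2D_{\mathrm{TV}}(s_t)$, and the crude bound (continuation total variation at most $1$) for the average form. The only difference is cosmetic: the paper pulls out $|R|\le\xi$ pointwise and then observes that $\mathbb E_{y_{>t}\sim\mu}\bigl|1-\prod_{j>t}r_j\bigr|$ is exactly twice that continuation total variation, so your value-function rewriting reaches the same quantity but is not actually needed (only a pointwise bound on $|1-\prod_j r_j|$ would be hopeless).
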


\textbf{Proof of the max-divergence form}
By Lemma~\ref{lem:app-identity},
\begin{equation*}
\mathcal J(\pi)-\mathcal J(\mu)=L'_{\mu}(\pi)-\Delta(\mu,\pi).
\end{equation*}
To bound $\Delta(\mu,\pi)$, first use $|R(y)|\le\xi$:
\begin{align}
\Delta(\mu,\pi)
&\le \xi\sum_{t=1}^{T}\mathbb E_{y_{\le t}\sim\mu}\left[
\left|\frac{\pi(y_t\mid s_t)}{\mu(y_t\mid s_t)}-1\right|
\mathbb E_{y_{>t}\sim\mu(\cdot\mid s_{t+1})}
\left|1-\frac{\pi(y_{>t}\mid s_{t+1})}{\mu(y_{>t}\mid s_{t+1})}\right|
\right].
\label{eq:app-delta-step1}
\end{align}
The inner expectation is exactly twice the $D_{\mathrm{TV}}$ divergence between the future-trajectory distributions:
\begin{equation*}
\mathbb E_{y_{>t}\sim\mu(\cdot\mid s_{t+1})}
\left|1-\frac{\pi(y_{>t}\mid s_{t+1})}{\mu(y_{>t}\mid s_{t+1})}\right|
=2D_{\mathrm{TV}}\bigl(\mu_{>t}(\cdot\mid s_{t+1})\,\|\,\pi_{>t}(\cdot\mid s_{t+1})\bigr).
\end{equation*}
Applying Lemma~\ref{lem:app-sequence-tv} to this future-horizon $D_{\mathrm{TV}}$ and then upper-bounding each term by $D_{\mathrm{TV}}^{\max}(\mu,\pi)$ gives
\begin{equation*}
D_{\mathrm{TV}}\bigl(\mu_{>t}(\cdot\mid s_{t+1})\,\|\,\pi_{>t}(\cdot\mid s_{t+1})\bigr)
\le (T-t)D_{\mathrm{TV}}^{\max}(\mu,\pi).
\end{equation*}
Substituting into Equation~\eqref{eq:app-delta-step1} yields
\begin{align*}
\Delta(\mu,\pi)
&\le 2\xi D_{\mathrm{TV}}^{\max}(\mu,\pi)
\sum_{t=1}^{T}(T-t)
\mathbb E_{s_t\sim\mu}\left[\sum_{y_t}\mu(y_t\mid s_t)
\left|\frac{\pi(y_t\mid s_t)}{\mu(y_t\mid s_t)}-1\right|\right]\\
&=2\xi D_{\mathrm{TV}}^{\max}(\mu,\pi)
\sum_{t=1}^{T}(T-t)
\mathbb E_{s_t\sim\mu}\Big[2D_{\mathrm{TV}}(\mu(\cdot\mid s_t)\|\pi(\cdot\mid s_t))\Big]\\
&\le 4\xi\bigl[D_{\mathrm{TV}}^{\max}(\mu,\pi)\bigr]^2\sum_{t=1}^{T}(T-t)\\
&=2\xi T(T-1)\bigl[D_{\mathrm{TV}}^{\max}(\mu,\pi)\bigr]^2.
\end{align*}
Substituting this bound into the exact identity proves Equation~\eqref{eq:app-max-bound}.

\textbf{Proof of the average-divergence form}
Return to Equation~\eqref{eq:app-delta-step1}. Instead of upper-bounding the future-trajectory $D_{\mathrm{TV}}$ by $(T-t)D_{\mathrm{TV}}^{\max}$, use the trivial bound $D_{\mathrm{TV}}\le 1$. This gives:
\begin{align*}
\Delta(\mu,\pi)
&\le 2\xi\sum_{t=1}^{T}\mathbb E_{y_{\le t}\sim\mu}\left[
\left|\frac{\pi(y_t\mid s_t)}{\mu(y_t\mid s_t)}-1\right|
\right]\\
&=2\xi\sum_{t=1}^{T}\mathbb E_{s_t\sim\mu}\left[
\sum_{y_t}\mu(y_t\mid s_t)
\left|\frac{\pi(y_t\mid s_t)}{\mu(y_t\mid s_t)}-1\right|
\right]\\
&=2\xi\sum_{t=1}^{T}\mathbb E_{s_t\sim\mu}\Big[2D_{\mathrm{TV}}(\mu(\cdot\mid s_t)\|\pi(\cdot\mid s_t))\Big]\\
&=4\xi\,\mathbb E_{y\sim\mu}\left[\sum_{t=1}^{T}D_{\mathrm{TV}}\bigl(\mu(\cdot\mid s_t)\,\|\,\pi(\cdot\mid s_t)\bigr)\right].
\end{align*}
Substituting this into Equation~\eqref{eq:app-identity} proves Equation~\eqref{eq:app-avg-bound}.

The two forms can be combined into the immediate corollary:
\begin{align}
\mathcal J(\pi)-\mathcal J(\mu)
\ge L'_{\mu}(\pi)-\min\Bigg(
2\xi T(T-1)\bigl[D_{\mathrm{TV}}^{\max}(\mu,\pi)\bigr]^2,
4\xi\,\mathbb E_{y\sim\mu}\left[\sum_{t=1}^{T}D_{\mathrm{TV}}\bigl(\mu(\cdot\mid s_t)\,\|\,\pi(\cdot\mid s_t)\bigr)\right]
\Bigg).
\label{eq:app-composite}
\end{align}
The max-divergence form is sharper for very small updates, whereas the average-divergence form avoids the quadratic dependence on the horizon and is more useful for long LLM responses.

\section{How Does Each Stability Approach Work in Async RL?}
\label{app:stabilizers}

The main paper introduces \satr{} as one way to adapt the sampled trust-region surrogate to the observed mismatch distribution. The larger asynchronous-RL literature can be read through the same lens. Each method chooses one mathematical object built from the observed log-ratio or from closely related quantities, such as the granularity of the ratio, the denominator, a mask, a threshold, or a clip radius, and then modifies that object. Table~\ref{tab:designspace} summarizes the design space, and the subsections below expand the methods.

\begin{table}[t]
\centering
\caption{Async-RL stabilizers in one design space. The table records which mathematical object each method modifies and whether the gate depends on the update direction $\operatorname{sign}(\hat A_t(r_t-1))$.}
\label{tab:designspace}
\small
\setlength{\tabcolsep}{5pt}
\begin{adjustbox}{max width=\textwidth}
\begin{tabular}{@{}llllll@{}}
\toprule
Method & Unit & Discriminator & Constraint & Weight $w_{b,t}$ & Direction \\
\midrule
PPO~\citep{schulman2017ppo} & token & $|r_{b,t}-1|$ & $|r_{b,t}-1|\le\varepsilon$ & $\{0,1\}$ & asymmetric \\
TRPO~\citep{schulman2015trpo} & state & $\dtv[s_t]$ & $\dtv[s_t]\le\delta_{\mathrm{TR}}$ & --- & asymmetric \\
DPPO~\citep{qi2026dppo} & token & $D_t^{\mathrm{TV}}$ & $D_t^{\mathrm{TV}}\le\delta$ & $\{0,1\}$ & asymmetric \\
DRPO~\citep{yao2026drpo} & token & $D_t^{\mathrm{TV}}$ & soft quadratic penalty & $[1-\tfrac1\delta,1+\tfrac1\delta]$ & asymmetric \\
TIS~\citep{zheng2025tis} & token & $r_{b,t}$ & one-sided cap & $\min(r_{b,t},C)$ & one-sided \\
IcePop~\citep{zhao2025icepop,hou2026sao} & token & train/inference ratio & symmetric range gate & $\{0,1\}$ or in-range weight & symmetric \\
KPop~\citep{guo2026kpop} & token & Bernoulli KL & bidirectional threshold & $\{0,1\}$ & symmetric \\
GSPO~\citep{zheng2025gspo} & sequence & $|\rho_b^{\mathrm{seq}}-1|$ & sequence clip & $\{0,1\}$ & asymmetric \\
SeqClip~\citep{tencent2025seqclip} & sequence & $\rho_b^{\mathrm{seq}}$ & $\rho_b^{\mathrm{seq}}\in[\alpha_{\mathrm{seq}},\beta_{\mathrm{seq}}]$ & $\{0,1\}$ & symmetric \\
R3~\citep{ma2025r3} & forward pass & routing mask & replay rollout Top-$K$ mask & --- & --- \\
\rowcolor{metabg!8}
\satr{} (ours) & token & $|d_{b,t}|$ and $|r_{b,t}-1|$ & adaptive clip interval & $\{0,1\}$ & asymmetric, adaptive \\
\bottomrule
\end{tabular}
\end{adjustbox}
\end{table}

\begin{figure}[p]
\centering
\begin{subfigure}[t]{0.85\textwidth}
    \includegraphics[width=\linewidth]{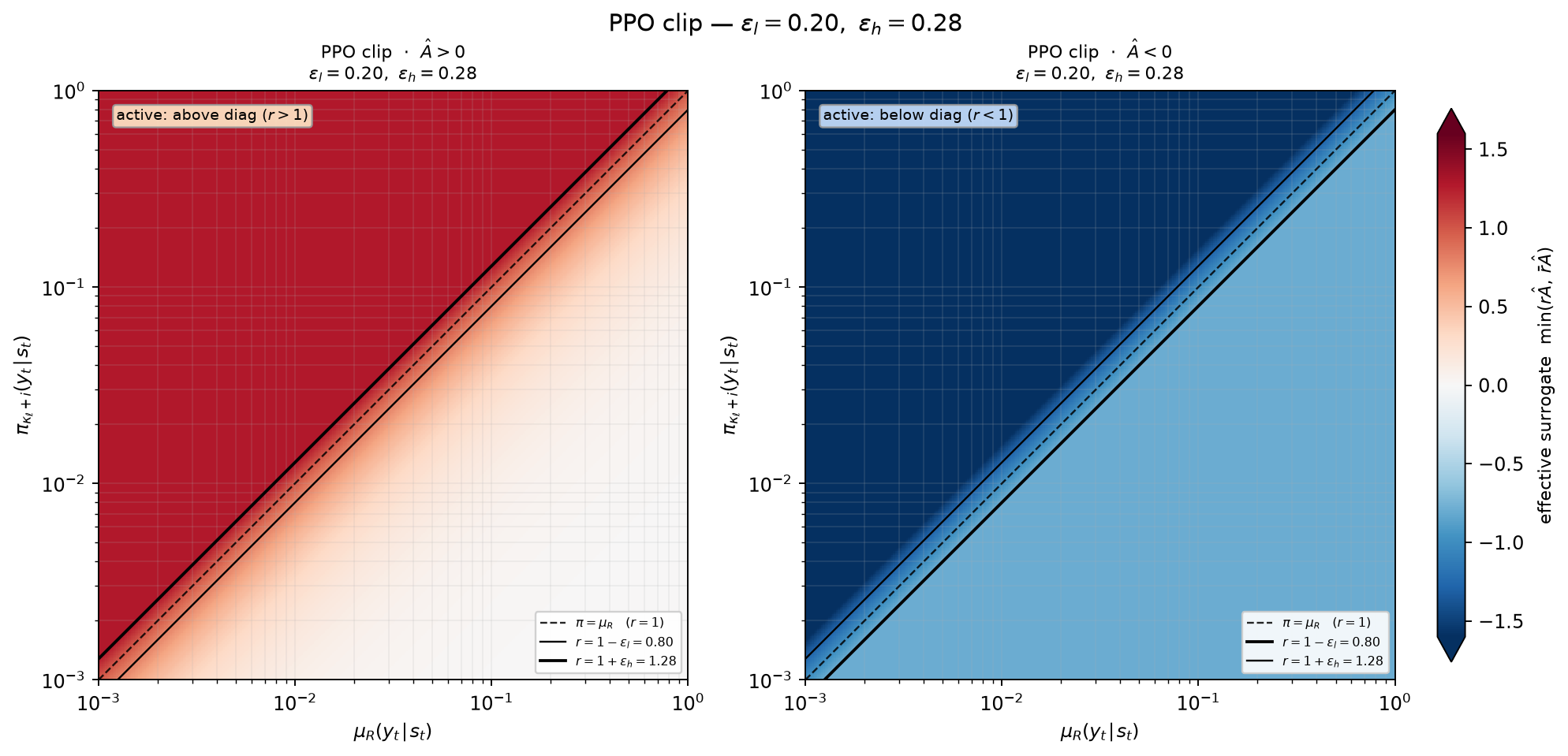}
    \caption{PPO}
\end{subfigure}
\caption{Asymmetric-clipping family, part I: PPO.}
\label{fig:geometry}
\end{figure}

\clearpage

\begin{figure}[p]
\centering
\begin{subfigure}[t]{0.85\textwidth}
    \includegraphics[width=\linewidth]{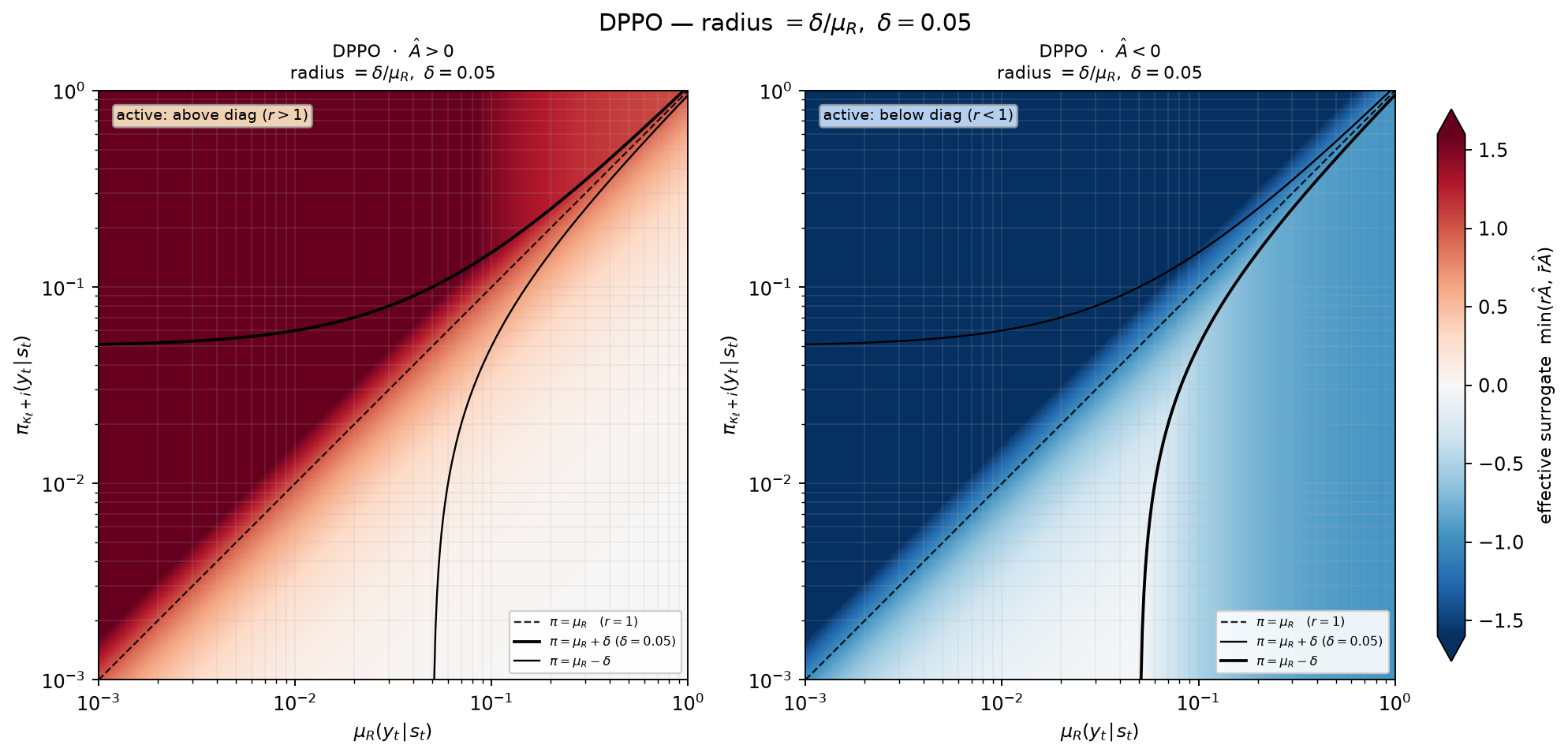}
    \caption{DPPO}
\end{subfigure}\\[0.6em]
\begin{subfigure}[t]{0.85\textwidth}
    \includegraphics[width=\linewidth]{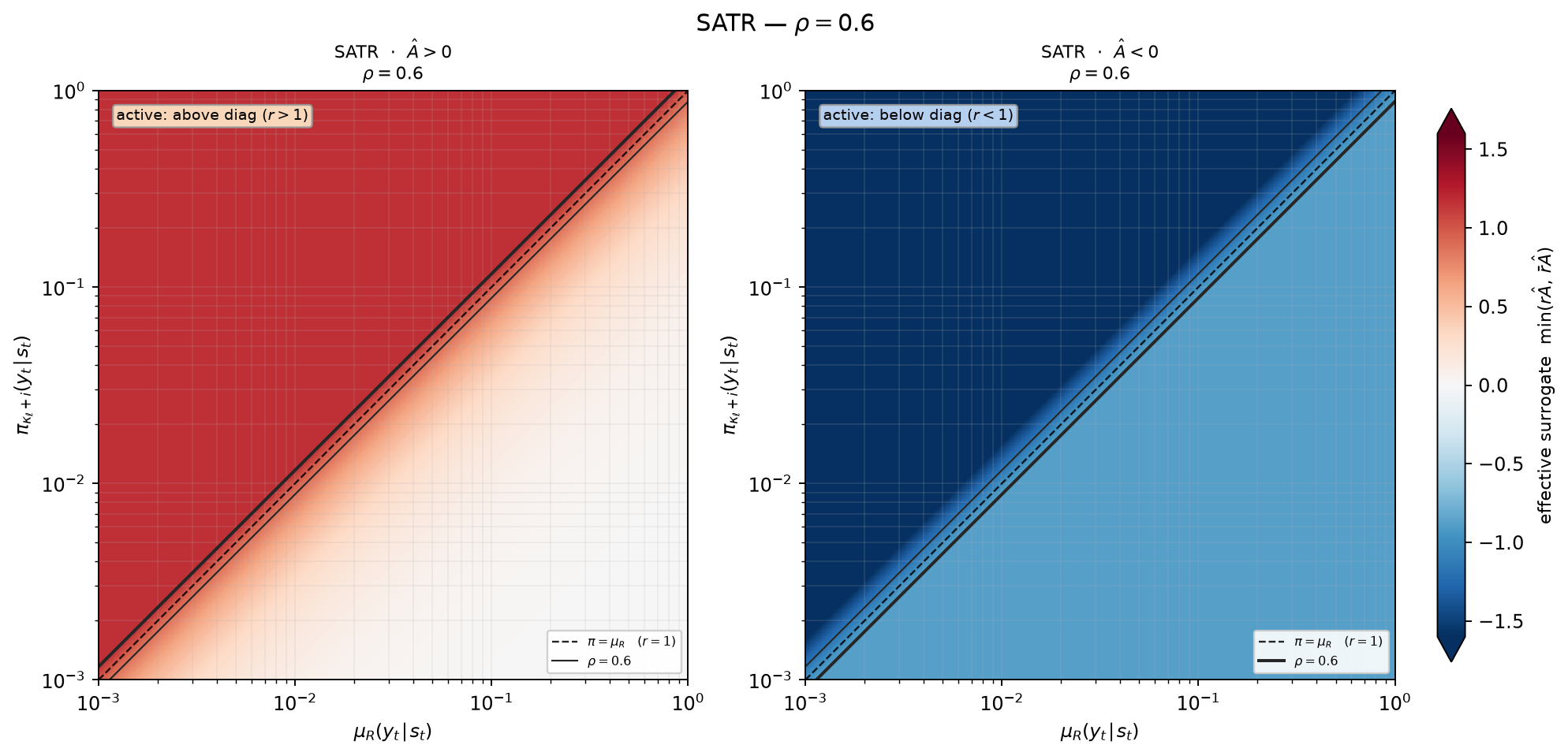}
    \caption{\satr{}}
\end{subfigure}
\caption{Asymmetric-clipping family, part II: DPPO and \satr{}. In this case, the contraction factor of \satr{} is set to $0.6$, which is staleness-adaptive in implementation.}
\label{fig:geometry-continued}
\end{figure}

\clearpage

\subsection{GSPO and SeqClip}
\label{app:gspo-seqclip}

GSPO moves importance weighting from the token to the response by defining the length-normalized sequence ratio:
\begin{equation}
\rho_b^{\mathrm{seq}}
=\left(\frac{\pi^{(j)}(y_b\mid x)}{\mu(y_b\mid x)}\right)^{1/T_b}
=\exp\left(\frac{1}{T_b}\sum_{t=1}^{T_b}d_{b,t}\right),
\label{eq:app-gspo}
\end{equation}
and clipping at the response level:
\begin{equation}
\mathcal S_{\mathrm{GSPO}}
=\mathbb E_{\{y_b\}\sim\mu}
\left[
\frac{1}{G}\sum_{b=1}^{G}
\min\Bigl(\rho_b^{\mathrm{seq}}\hat A_b,
\operatorname{clip}(\rho_b^{\mathrm{seq}},1-\varepsilon,1+\varepsilon)\hat A_b\Bigr)
\right].
\label{eq:app-gspo-loss}
\end{equation}
Length normalization removes the automatic exponential dependence on response length that the raw product of token ratios would induce, although token-level deviations can still reinforce or cancel. SeqClip adds a hard response-level gate:
\begin{equation}
\mathcal S_{\mathrm{SeqClip}}
=\mathbb E_{\{y_b\}\sim\mu}
\left[
\frac{1}{G}\sum_{b=1}^{G}
\mathbf 1\{\alpha_{\mathrm{seq}}\le \rho_b^{\mathrm{seq}}\le \beta_{\mathrm{seq}}\}
\min\Bigl(\rho_b^{\mathrm{seq}}\hat A_b,
\operatorname{clip}(\rho_b^{\mathrm{seq}},1-\varepsilon,1+\varepsilon)\hat A_b\Bigr)
\right].
\label{eq:app-seqclip}
\end{equation}
so responses outside a narrow interval are discarded altogether. Both methods smooth isolated token fluctuations by construction, but neither removes routing or engine mismatch, which is why the empirical combination with R3 remains relevant.

\paragraph{Empirical comparison.}
Equations~\eqref{eq:app-gspo} and~\eqref{eq:app-seqclip} define the sequence-level ratio and the hard response-level gate. Table~\ref{tab:app-exp-gspo} summarizes the matched GSPO comparison under the shared setup of Appendix~\ref{app:expdetails}. GSPO stays above GRPO at both configured lags, with $32.25$ against $31.25$ at lag $1$ and $31.46$ against $30.17$ at lag $8$. Both lag-$8$ runs later collapse in the logged window. The comparison therefore supports a difference in weighting granularity, but by itself it does not isolate which mismatch source drives the later instability. No standalone SeqClip run is reported in the current study.

\begin{table}[h]
\centering
\caption{Reported best AIME24 pass@1 for the GSPO comparison.}
\label{tab:app-exp-gspo}
\small
\begin{tabular}{@{}lcc@{}}
\toprule
Method & lag $1$ & lag $8$ \\
\midrule
GRPO & $31.25$ & $30.17^{\ddagger}$ \\
GSPO & $32.25$ & $31.46^{\ddagger}$ \\
\bottomrule
\end{tabular}
\end{table}

\subsection{Rollout Routing Replay}
\label{app:r3-detail}

R3 addresses mismatch from discrete MoE routing. Even at nominally identical weights, SGLang and Megatron can choose different Top-$K$ experts for the same token, and asynchronous updates add another source of variation. R3 replays the rollout-time Top-$K$ mask while keeping the current trainer logits in the softmax denominator,
\begin{equation}
g^{(j)}_{\mathrm{replay},e}
=\frac{I^{(\ell)}_{\mathrm{infer},e}\,\exp\bigl(z^{(j)}_{\mathrm{train},e}\bigr)}
{\sum_{e'}I^{(\ell)}_{\mathrm{infer},e'}\,\exp\bigl(z^{(j)}_{\mathrm{train},e'}\bigr)},
\qquad
\mathbf h^{(j)}_{\mathrm{replay}}
=\sum_{e=1}^{M}g^{(j)}_{\mathrm{replay},e}\,\mathcal E_e^{(j)}(\mathbf h_{\mathrm{train}}).
\label{eq:r3}
\end{equation}
This aligns the discrete expert set with rollout while still allowing gradients to flow through the current router. It does not reproduce the entire rollout forward pass, since hidden states, logits, expert weights, and downstream layers may still differ, but it removes one important mismatch source at the routing branch.

\paragraph{Empirical comparison.}
Equation~\eqref{eq:r3} defines the replayed routing gate. Table~\ref{tab:app-exp-r3} summarizes the matched R3 comparison. The strongest gain appears for the GSPO pairing, which reaches $34.00$ at lag $1$ and $33.13$ at lag $8$. In the reported summary, the lag-$8$ GSPO w/ R3 run is not marked as a later-collapse setting. GRPO w/ R3 improves over GRPO at lag $1$, but remains below GSPO w/ R3 at both lags. These numbers are consistent with routing replay reducing MoE routing mismatch on the reported stack, but they do not establish a unique causal path.

\begin{table}[h]
\centering
\caption{Reported best AIME24 pass@1 for the R3 comparison.}
\label{tab:app-exp-r3}
\small
\begin{tabular}{@{}lcc@{}}
\toprule
Method & lag $1$ & lag $8$ \\
\midrule
GRPO & $31.25$ & $30.17^{\ddagger}$ \\
GSPO & $32.25$ & $31.46^{\ddagger}$ \\
GRPO w/ R3 & $31.83$ & $29.58$ \\
GSPO w/ R3 & $34.00$ & $33.13$ \\
\bottomrule
\end{tabular}
\end{table}

\subsection{TIS, IcePop, and KPop}
\label{app:token-stabilizers}

TIS, IcePop, and KPop act on sampled tokens rather than whole responses. TIS is the mildest intervention: it caps only the overshoot side of the importance weight:
\begin{equation}
w_{b,t}^{\mathrm{TIS}}=\min\bigl(r_{b,t},C\bigr),
\qquad
\mathcal S_{\mathrm{TIS}}
=\mathbb E\left[\sum_{b,t}\operatorname{sg}[w_{b,t}^{\mathrm{TIS}}]\hat A_{b,t}\log\pi^{(j)}(y_{b,t}\mid s_{b,t})\right].
\label{eq:app-tis}
\end{equation}
It changes gradient magnitude but does not alter the clip interval. IcePop instead applies a two-sided gate to the train-engine versus inference-engine ratio:
\begin{equation}
k_{b,t}=\frac{\tilde\mu_{b,t}(y_{b,t}\mid s_{b,t})}{\mu_{b,t}(y_{b,t}\mid s_{b,t})},
\qquad
M_{b,t}^{\mathrm{IcePop}}=k_{b,t}\,\mathbf 1\{\alpha\le k_{b,t}\le\beta\},
\label{eq:app-icepop}
\end{equation}
which is often simplified to a binary in-range mask. KPop replaces raw ratios with a Bernoulli-KL test:
\begin{equation}
D_{\mathrm{KL}}^{\mathrm{Bern}}(p\|q)=p\log\frac{p}{q}+(1-p)\log\frac{1-p}{1-q},
\qquad
M_{b,t}^{\mathrm{KPop}}
=\mathbf 1\bigl\{D_{\mathrm{KL}}^{\mathrm{Bern}}(p_{b,t}^{\pi}\|p_{b,t}^{\mu})\le\phi\bigr\}
\mathbf 1\bigl\{D_{\mathrm{KL}}^{\mathrm{Bern}}(p_{b,t}^{\mu}\|p_{b,t}^{\pi})\le\phi\bigr\}.
\label{eq:app-kpop}
\end{equation}
where $p_{b,t}^{\pi}$ and $p_{b,t}^{\mu}$ are the trainer- and rollout-side probabilities of the sampled token. IcePop and KPop are symmetric in direction: they may discard tokens whose local gradient would move the sampled ratio back toward the behavior reference.

\paragraph{Empirical comparison.}
Equations~\eqref{eq:app-tis}--\eqref{eq:app-kpop} summarize the token-level mechanisms. The current study reports matched runs for TIS and its compositions with R3, but not standalone AIME24 grids for IcePop or KPop. Table~\ref{tab:app-exp-tis} shows that the TIS family remains in a relatively narrow band from $31.21$ to $32.92$ across the two configured lags, and none of these runs is marked as a later-collapse setting. The strongest lag-$8$ value in this family is $32.92$ for GSPO w/ TIS w/ R3.

\begin{table}[h]
\centering
\caption{Reported best AIME24 pass@1 for the TIS comparison.}
\label{tab:app-exp-tis}
\small
\begin{tabular}{@{}lcc@{}}
\toprule
Method & lag $1$ & lag $8$ \\
\midrule
GRPO w/ TIS & $31.62$ & $31.46$ \\
GSPO w/ TIS & $32.88$ & $31.21$ \\
GRPO w/ TIS w/ R3 & $31.79$ & $31.79$ \\
GSPO w/ TIS w/ R3 & $31.88$ & $32.92$ \\
\bottomrule
\end{tabular}
\end{table}

\subsection{Asymmetric clipping}
\label{app:asymclip}

The comparison most relevant for this paper is between PPO, DPPO, and \satr{}, since all three act directly on sampled-token updates through an asymmetric trust-region-style gate. For a fixed sampled token with ratio $r_t$ and advantage $\hat A_t$, the unclipped surrogate has derivative $\hat A_t$ with respect to $r_t$, so the sign of $\hat A_t$ determines the locally preferred direction. Relative to $r_t=1$, the sampled update is locally diverging when $\operatorname{sign}(\hat A_t(r_t-1))>0$ and locally converging when $\operatorname{sign}(\hat A_t(r_t-1))<0$. The three methods differ in how they decide when an outward move should be stopped: PPO uses a fixed ratio interval, DPPO replaces that fixed interval by a token-dependent threshold induced by a sampled $D_{\mathrm{TV}}$ bound, and \satr{} further makes the effective clip radius contract on the high-staleness tail while leaving pull-back updates open.

Away from the kinks, PPO therefore keeps the gate:
\begin{equation}
M_t^{\mathrm{PPO}}=\mathbf 1\Bigl\{\operatorname{sign}(\hat A_t(r_t-1))\le 0\ \vee\ |r_t-1|\le\varepsilon\Bigr\},
\label{eq:app-ppo-mask}
\end{equation}
which clips only outward moves beyond the nominal interval and always preserves pull-back updates.

DPPO keeps the same directional logic but changes the discriminator. Under the sampled $D_{\mathrm{TV}}$ discriminator:
\begin{equation}
D_t^{\mathrm{TV}}=\bigl|\pi^{(j)}(y_t\mid s_t)-\mu(y_t\mid s_t)\bigr|=\mu(y_t\mid s_t)\,|r_t-1|,
\label{eq:app-bintv}
\end{equation}
so $D_t^{\mathrm{TV}}\le\delta$ is equivalent to:
\begin{equation}
|r_t-1|\le \frac{\delta}{\mu(y_t\mid s_t)}.
\label{eq:app-dppo-eps}
\end{equation}
The induced mask is:
\begin{equation}
M_t^{\mathrm{DPPO}}
=\mathbf 1\Bigl\{\operatorname{sign}(\hat A_t(r_t-1))\le 0\ \vee\ |r_t-1|\le\delta/\mu(y_t\mid s_t)\Bigr\},
\label{eq:app-dppo-mask}
\end{equation}
which is loose on long-tail tokens and tight on head tokens. SPO and DRPO replace hard clipping with quadratic penalties. SPO uses:
\begin{equation}
\mathcal S_{\mathrm{SPO}}(\pi)
=\mathbb E\left[\sum_t\left(r_t\hat A_t-\frac{|\hat A_t|}{2\varepsilon}(r_t-1)^2\right)\right],
\label{eq:app-spo}
\end{equation}
whose stationary point lies at $r_t^{\star}=1+\operatorname{sign}(\hat A_t)\varepsilon$, exactly on PPO's boundary. DRPO makes the same replacement in DPPO's sampled $D_{\mathrm{TV}}$ geometry:
\begin{equation}
\mathcal S_{\mathrm{DRPO}}(\pi^{(j)})
=\mathbb E_{y\sim\mu}\left[\sum_t\left(r_t\hat A_t-\frac{|\hat A_t|}{2\delta}\,\mu(y_t\mid s_t)(r_t-1)^2\right)\right].
\label{eq:app-drpo}
\end{equation}
Unlike \satr{}, however, their thresholds are fixed in advance rather than inferred from the current mismatch tail.

\paragraph{Empirical comparison.}
Equations~\eqref{eq:app-dppo-mask} and~\eqref{eq:app-drpo} place DPPO and DRPO inside the asymmetric-clipping family. Table~\ref{tab:app-exp-dppo} summarizes the reported DPPO comparison. DPPO reaches $33.96$ at lag $1$ and $32.71$ at lag $8$. It stays above the GRPO baseline at both lags and outside the runs marked as later collapse. This is consistent with a sampled $D_{\mathrm{TV}}$ gate improving stability on the reported stack, but it does not explain the remaining gap to \satr-GSPO w/ R3 or even to \satr-GRPO w/ R3.

\begin{table}[h]
\centering
\caption{Reported best AIME24 pass@1 for the DPPO comparison.}
\label{tab:app-exp-dppo}
\small
\begin{tabular}{@{}lcc@{}}
\toprule
Method & lag $1$ & lag $8$ \\
\midrule
GRPO & $31.25$ & $30.17^{\ddagger}$ \\
GSPO & $32.25$ & $31.46^{\ddagger}$ \\
DPPO & $33.96$ & $32.71$ \\
\bottomrule
\end{tabular}
\end{table}

\subsection{Top-$p$ configuration}
\label{app:topp}

Support mismatch appears when rollout truncates the behavior distribution but training evaluates an untruncated target policy. If rollout uses top-$p$ below $1$, actions outside the nucleus have zero behavior probability, so absolute continuity $\pi^{(j)}\ll\mu_{b,t}$ can fail. One fix is to replay the rollout truncation mask inside the trainer softmax, as in MAI-Thinking-1~\citep{microsoft2025mai}. If $\mathcal T_{b,t}\subseteq\mathcal V$ denotes the rollout nucleus, training uses:
\begin{equation}
\tilde{\mathrm{logit}}_{b,t}(a)=
\begin{cases}
\mathrm{logit}_{b,t}(a), & a\in\mathcal T_{b,t},\\
-\infty, & a\notin\mathcal T_{b,t},
\end{cases}
\qquad
\pi^{(j)}(a\mid s_{b,t})=\operatorname{softmax}(\tilde{\mathrm{logit}}_{b,t})(a).
\label{eq:app-top-p-replay}
\end{equation}
Our experiments use the simpler alternative of disabling truncation at rollout entirely by setting temperature $1.0$, top-$p$ to $1.0$, and top-$k$ to $-1$, so the rollout support matches the full vocabulary.

\paragraph{Empirical note.}
Equation~\eqref{eq:app-top-p-replay} gives the replayed support mask when rollout truncation is enabled. The companion analysis does not report a separate AIME24 grid for top-$p$. In the present experiments we instead disable truncation at rollout with temperature $1.0$, top-$p$ set to $1.0$, and top-$k$ set to $-1$, so rollout support matches the full vocabulary. Under this configuration, the reported trainer-to-rollout KL remains on the order of $10^{-3}$, which is comparable to the post-R3 regime on the same stack.

\subsection{Using rollout log probabilities}
\label{app:rollout-lp}

A separate choice concerns the denominator of the sampled ratio. Standard asynchronous PPO/GRPO often uses the trainer-side recomputation $\tilde\mu_{b,t}$ as the reference log-probability. Replacing it with the rollout log-probability restores the actual behavior denominator:
\begin{equation}
\log\tilde\mu_{b,t}(y_{b,t}\mid s_{b,t})\leftarrow \log\mu_{b,t}(y_{b,t}\mid s_{b,t}),
\qquad
r_{b,t}^{\mathrm{rollout-lp}}
=\exp\Bigl(\log\pi^{(j)}(y_{b,t}\mid s_{b,t})-\log\mu_{b,t}(y_{b,t}\mid s_{b,t})\Bigr)=r_{b,t}.
\label{eq:app-rollout-lp}
\end{equation}
Under the support condition, this is the denominator required by the importance-sampling identity. It may also increase the observed variance of sampled ratios, because implementation mismatch is no longer absorbed by trainer-side recomputation. The change therefore exposes mismatch more faithfully, but does not reduce policy-version lag by itself.

\paragraph{Empirical comparison.}
Equation~\eqref{eq:app-rollout-lp} defines the denominator switch. Table~\ref{tab:app-exp-rolloutlp} records the observed variance of the sampled log-ratio with and without rollout log probabilities. The switch exposes more implementation mismatch to the optimizer and increases the measured variance in the reported runs. This is a property of the estimator used in optimization rather than a change to the deterministic population bound itself.

\begin{table}[h]
\centering
\caption{Observed variance of the sampled log-ratio when rollout log probabilities are used as the denominator.}
\label{tab:app-exp-rolloutlp}
\small
\begin{tabular}{@{}lcc@{}}
\toprule
Method & lag $1$ & lag $8$ \\
\midrule
GRPO & $2.6\times 10^{-4}$ & $1.4\times 10^{-4}$ \\
GRPO w/ rollout log probabilities & $3.2\times 10^{-4}$ & $2.7\times 10^{-4}$ \\
\bottomrule
\end{tabular}
\end{table}

\section{Experimental Details}
\label{app:expdetails}

The backbone is Qwen3-30B-A3B-Base, an MoE model with $48$ layers, hidden size $2048$, $128$ experts with Top-$8$ routing, MoE-FFN width $768$, a softmax router, no auxiliary routing loss, and RoPE base $10^6$. The benchmark is AIME24 with $30$ problems. Each evaluation uses $16$ samples per problem, a $30$k response cap, top-$p$ set to $1$, and is run every five training iterations. Each logged evaluation score averages four evaluation runs, and the base-model reference is $9.38$.

All experiments run on the slime asynchronous RL framework with SGLang as the rollout engine and Megatron as the trainer. The configured lag is controlled at $n\in\{1,8\}$, with a weight-broadcast interval of $8$ for the lag-$8$ runs. All runs in a comparison share the optimizer, data, and reward. Table~\ref{tab:main-combined} reports both the best logged AIME24 checkpoint over the training window and the corresponding last-epoch sampled mismatch values. The symbol $\ddagger$ marks runs that later collapse and do not recover in the logged window, even though the best-so-far checkpoint is still reported.

Training data consists of the union of DAPO-Math-17k~\citep{yu2025dapoopensourcellmreinforcement} and Dolci-RL-Zero-Math-7B~\citep{olmo2026olmo3}, with $30{,}712$ prompts in total. We use a chat template, rollout shuffling, and balanced sampling. The reward is the rule-based PrimeMath verifier with multi-pattern answer extraction, Hendrycks-MATH normalization, and SymPy equivalence checking. Optimization uses Adam with $\beta_1=0.9$, $\beta_2=0.98$, and constant learning rate $10^{-6}$, without warmup or decay. Each iteration consumes $256$ prompts and $16$ samples per prompt, for $4096$ responses in total, with one minor step per broadcast and $544$ rollout iterations. Rollout and evaluation use a maximum response length of $32$k tokens and dynamic batching with a $32$k per-GPU token cap. Sampling is performed at temperature $1.0$, top-$p$ set to $1.0$, and top-$k$ set to $-1$, so the rollout distribution has full-vocabulary support. The base clip radii are $(\elow,\ehigh)=(0.2,0.2)$. The KL regularizer is low-variance KL with coefficient $10^{-3}$, and the entropy bonus is zero. For \satr{}, we use the Hill kernel with exponent $2$ and reference quantile $0.90$. Hardware consists of one node with eight GPUs, using GPU0--3 for training and GPU4--7 for rollout, with TP$=4$, PP$=1$, CP$=1$, EP$=4$, ETP$=1$, sequence parallelism, bf16 arithmetic with fp32 gradient all-reduce and fp32 attention softmax, the FlashAttention backend, and SGLang memory fraction set to $0.85$.

The per-method settings differ only in the ratio and KL granularity and in the clipping or gating rule. GRPO uses per-token ratios and per-token KL with the standard asymmetric PPO clip. GSPO keeps the same outer objective but replaces the token ratio and KL by a sequence-level ratio and a sequence-level mean KL that is all-gathered and broadcast to all active tokens of the response. DPPO multiplies the GRPO clip by a hard mask that zeros sampled tokens only when the local update is outward and the sampled $D_{\mathrm{TV}}$ discriminator exceeds its threshold; in the reported runs, this is the DPPO-$D_{\mathrm{TV}}$ variant and the threshold is $10^{-3}$. When \satr{} is layered on top of any base recipe, it contracts the shared radii using the ratio scalar of that recipe, namely the token ratio for GRPO and the broadcast sequence ratio for GSPO, and it is bit-identical to the base loss when disabled. R3 and TIS are toggled independently and therefore compose with both fixed-clip and adaptive-clip objectives.

\paragraph{Metrics and statistical scope}
\label{app:metrics-scope}
We report (i) the best-observed AIME24 pass@1 over the logged training window; AIME24 has $30$ problems, each evaluation uses $16$ samples per problem, runs every five training iterations, and each logged score averages four evaluation runs (base-model reference: $9.38$); and (ii) the sampled training--inference mismatch:
\begin{equation}
\dpi\;=\;\E_{(b,t)\in\mathcal T_{\mathrm{act}}}\big|\log r^{\mathrm{tok}}_{b,t}\big|,
\label{eq:mismatch-metric}
\end{equation}
computed from the token ratio of Equation~\eqref{eq:gspo-ratio} for all methods including GSPO, reported in the lower panel of Table~\ref{tab:main-combined} and as paired per-step curves in Figure~\ref{fig:drift-pair}. Every configuration is a single training seed, so each cell is one run: the numbers are a snapshot rather than confidence-interval estimates, and where the text calls two $\dpi$ values ``close'' we mean within the trailing-25\% within-run fluctuation, not a multi-seed test. Lower $\dpi$ means less observed sampled mismatch; it is not a full-$D_{\mathrm{TV}}$ measurement and not by itself a return ranking.

\begin{table}[h]
\centering
\caption{Per-method reproducible hyper-parameters. All other settings are shared and summarized above.}
\label{tab:hparams}
\small
\setlength{\tabcolsep}{7pt}
\begin{tabular}{@{}lccc@{}}
\toprule
Hyper-parameter & GRPO & GSPO & DPPO \\
\midrule
advantage estimator & GRPO & GSPO & DPPO \\
ratio / KL granularity & per-token & per-sequence & per-token \\
$(\elow,\ehigh)$ & $(0.2,0.2)$ & $(0.2,0.2)$ & $(0.2,0.2)$ \\
KL-loss coefficient & $10^{-3}$ & $10^{-3}$ & $10^{-3}$ \\
KL-loss type & low-variance KL & low-variance KL & low-variance KL \\
entropy coefficient & $0$ & $0$ & $0$ \\
extra mechanism & $-$ & sequence-level KL all-gather & $D_{\mathrm{TV}}$ hard mask $\delta=10^{-3}$ \\
needs rollout log-prob & no & no & yes \\
\bottomrule
\end{tabular}
\end{table}

Each cell of Table~\ref{tab:main-combined} is a single run at a fixed seed, so the empirical scope is intentionally narrow. The $\dpi$ curves in Figure~\ref{fig:drift-pair} are raw per-step logs at a four-step stride without smoothing. Steady-state values quoted in the caption average the trailing $25\%$ of each run, whereas the lower $\dpi$ block of Table~\ref{tab:main-combined} reports last-epoch means, so the two summaries can differ slightly for the same configuration. At lag $1$, the trailing-$25\%$ values are approximately $0.0102$ for GRPO, $0.0103$ for DPPO, $0.0101$ for GSPO w/ TIS, $0.0104$ for \satr-GSPO, $0.0061$ for GSPO w/ TIS w/ R3, $0.0055$ for GRPO w/ R3, $0.0057$ for GSPO w/ R3, and $0.0058$ for \satr-GSPO w/ R3. At lag $8$, the corresponding values are approximately $0.0068$ for GSPO w/ TIS w/ R3, $0.0073$ for GSPO w/ R3, $0.0085$ for \satr-GRPO w/ R3, $0.0097$ for GRPO w/ R3, and $0.0108$ for \satr-GSPO, while the remaining shown runs lie roughly in the $0.0103$--$0.0131$ range and the GSPO baseline diverges beyond the plotted axis.

\FloatBarrier

\section{\satr{} Runtime Diagnostics}
\label{app:monitor}

\begin{table}[h]
\centering
\caption{Diagnostics emitted at every optimizer step by the adaptive rule in Equations~\eqref{eq:gate}--\eqref{eq:adaptive-eps}. They describe the implemented gate on sampled tokens rather than full-vocabulary $D_{\mathrm{TV}}$.}
\label{tab:monitor}
\small
\begin{tabular}{@{}ll@{}}
\toprule
Quantity & Meaning \\
\midrule
$\E_{(b,t)\in\mathcal T_{\mathrm{act}}}[\elow\,c_{-,b,t}]$ & Mean effective lower radius on active sampled tokens. \\
$\E_{(b,t)\in\mathcal T_{\mathrm{act}}}[\ehigh\,c_{+,b,t}]$ & Mean effective upper radius on active sampled tokens. \\
$\E_{(b,t)\in\mathcal T_{\mathrm{act}}}\big[\mathbf 1\{q>0,\ |d_{b,t}|>q\}\big]$ & Realized gate rate; ties can make it smaller than $10\%$. \\
$\min_{(b,t)\in\mathcal T_{\mathrm{act}}}c_{\pm,b,t}$ & Strongest contraction applied to a sampled token in the microbatch. \\
$q$ & Detached microbatch quantile reference from Equation~\eqref{eq:quantile}. \\
\bottomrule
\end{tabular}
\end{table}

\end{document}